\documentclass[twoside]{article}

\usepackage[accepted]{aistats2019}

\usepackage[round]{natbib}

\usepackage[utf8]{inputenc}
\usepackage{xcolor}
\usepackage{definitions}
\usepackage{amsfonts}     
\usepackage{amsmath}
\usepackage{amsthm}
\usepackage{afterpage}
\usepackage[colorlinks]{hyperref}

\hypersetup{
    linkcolor=red,
    citecolor=[rgb]{0,0,0.5},
    filecolor=magenta,      
    urlcolor=blue           
}

\newtheorem{thm}{Theorem}
\newtheorem{cor}[thm]{Corollary}
\newtheorem{prop}[thm]{Proposition}
\newtheorem{lem}[thm]{Lemma}

\theoremstyle{definition}

\theoremstyle{plain}

\usepackage{graphicx}

\newcommand{\Dgen}{\Dcal}

\renewcommand{\loss}{\ell}

\newcommand{\etamu}{\mu}

\newcommand{\kQt}{\widehat{k}_{Q^l}}
\newcommand{\kQl}{\kQt}

\usepackage{booktabs}
\usepackage{multirow}
\usepackage{multicol}
\usepackage{subcaption}

\usepackage{floatrow}
\newfloatcommand{capbtabbox}{table}[][\FBwidth]

\begin{document}
	
	\runningauthor{Ga\"el Letarte, Emilie Morvant, Pascal Germain}

	\twocolumn[

	\aistatstitle{Pseudo-Bayesian Learning with Kernel Fourier Transform as Prior}

	\aistatsauthor{Ga\"el Letarte$^1$\\gael.letarte.1@ulaval.ca \And Emilie Morvant$^2$\\emilie.morvant@univ-st-etienne.fr \And Pascal Germain$^3$ \\pascal.germain@inria.fr}
	\aistatsaddress{$^1$ D\'epartement d'informatique et de g\'enie logiciel, Universit\'e Laval,	Qu\'ebec, Canada\\
	$^2$ Univ Lyon, UJM-Saint-Etienne, CNRS, Institut d'Optique Graduate School,\\Laboratoire Hubert Curien UMR 5516, Saint-Etienne, France\\
	$^3$ \'Equipe-projet Modal, Inria Lille - Nord Europe,  Villeneuve d'Ascq, France
	}
		]

\begin{abstract}
We revisit~\citet{rahimi-07}'s kernel random Fourier features (RFF) method through the lens of the PAC-Bayesian theory.
While the primary goal of RFF is to approximate a kernel, we look at the Fourier transform as a \emph{prior} distribution over trigonometric hypotheses. It naturally suggests learning a \emph{posterior} on these hypotheses. We derive generalization bounds that are optimized by learning a \emph{pseudo-posterior} obtained from a closed-form expression. Based on this study, we consider two learning strategies: The first one finds a compact landmarks-based representation of the data where each landmark is given by a distribution-tailored similarity measure, while the second one provides a PAC-Bayesian justification to the kernel alignment method of~\citet{SinhaD16}.
\end{abstract}

\section{INTRODUCTION}

Kernel methods~\citep{shawe-taylor-04-book}, such as support vector machines~\citep{boser-92,vapnik-98}, 
map data in a high dimension space in which a linear predictor can solve the learning problem at hand.
The mapping space is not directly computed and the linear predictor is represented implicitly thanks to a kernel function.
This is the powerful kernel trick: the kernel function computes the scalar product between two data points in this high dimension space.
However, kernel methods notoriously suffer from two drawbacks. On the first hand, computing all the scalar products for all the learning samples is costly: $O(n^2)$ for many kernel-based methods, where $n$ is the number of training data point. On the other hand, one has to select a kernel function adapted to the learning problem for the algorithm to succeed.

The first of these drawbacks has motivated the development of approximation methods making kernel methods more scalable, such as Nystr\"om approximation~\citep{williams2001nystrom,drineas2005nystrom} that constructs a low-rank approximation of the Gram matrix\footnote{The Gram matrix is the $n\times n$ matrix constituted by all the kernel values computed on the learning samples.} and is data dependent, or random Fourier features (RFF)~\citep{rahimi-07} that approximates the kernel with random features based on the Fourier transform and is not data dependent
\citep[a comparison between the two approaches have been conducted by][]{nystromVSrff}.
In this paper, we revisit the latter technique.

We start from the observation that a predictor based on kernel Fourier features can be interpreted as a weighted combination of those features according to a data independent distribution defined by the Fourier transform. 
We introduce an original viewpoint, where this distribution is interpreted as a {\it prior distribution} over a space 
of weak hypotheses---each hypothesis being a simple trigonometric function obtained by the Fourier decomposition.
This suggests that one can improve the approximation by adapting this distribution in regards to data points: we aim at learning a {\it posterior distribution}.
By this means, our study proposes strategies to learn a representation to the data.  
While this representation is not as flexible and powerful than the ones that can be learned by deep neural networks~\citep{Goodfellow-16-book}, we think that it is worthwhile to study this strategy to eventually solve the second drawback of kernel methods that currently heavily rely on the kernel choice.
This in mind, while the majority of work related to random Fourier features focus on the study and improvement of the kernel approximation, we propose here a reinterpretation 
in the light of the PAC-Bayesian theory~\citep{mcallester-99,catoni-07}. We derive generalization bounds that can be straightforwardly optimized by learning a \emph{pseudo-posterior} thanks to a closed-form expression.

The rest of the paper is organized as follows.
Section~\ref{sec:rffsetting} recalls the RFF setting. 
Section~\ref{sec:rffprior} expresses the Fourier transform as a prior leading {\it (i)} to a first PAC-Bayesian analysis and a landmarks-based algorithm in Section~\ref{sec:landmark}, {\it (ii)} to another PAC-Bayesian analysis in Section~\ref{sec:revisited} allowing to justify the kernel alignment method of~\citet{SinhaD16} and to propose a greedy kernel learning method.
Then Section~\ref{sec:experiments} provides experiments to show the usefulness of our work.

\section{RANDOM FOURIER FEATURES}
\label{sec:rffsetting}
\paragraph{Problem setting.}
Consider a classification problem where we want to learn a predictor $f:\Rbb^d\to Y$, from a \mbox{$d$-dimensional} space to a discrete output space (\eg, $Y=\{0,1,\ldots,|Y|{-}1\})$. 
The learning algorithm is given a training set $S=\{(\xbf_i, y_i)\}_{i=1}^n \sim \Dgen^n$  of $n$ \iid\ samples, where $\Dgen$ denotes the data generating distribution over $\Rbb^d\times Y$. 
We consider a positive-semidefinite (PSD) kernel $k:\Rbb^d\times\Rbb^d\to[-1,1]$. 
Kernel machines learn predictors of the form
\begin{equation} \label{eq:f_alpha_k}
f(\xbf) \ = \ \sum_{i=1}^n \alpha_i k(\xbf_i, \xbf)\,,
\end{equation}
by optimizing the values of vector 
$\alphabf \in \Rbb^n$.

\paragraph{Fourier features.}
When $n$ is large, running a kernel machine algorithm (like SVM or kernel ridge regression) is expensive in memory and running time.
To circumvent this problem, \citet{rahimi-07} introduced the \emph{random Fourier features} as a way to approximate the value of a \emph{shift-invariant kernel}, \ie, relying on the value of $\deltabf = \xbf-\xbf'\in \Rbb^d$, which we write 
$$k(\deltabf) = k(\xbf-\xbf') = k(\xbf,\xbf')$$ 
interchangeably.  
Let the distribution $p(\omegabf)$ be the Fourier transform of the shift-invariant kernel $k$, 
\begin{equation}\label{eq:p_omega}
p(\omegabf) \ =\ \frac1{(2\pi)^d}\int_{\Rbb^d} k(\deltabf)\, e^{- i \,\omegabf\cdot\deltabf} d\,\deltabf\,.
\end{equation}
Now, by writing $k$ as the inverse of the Fourier transform $p$, and using trigonometric identities, we obtain:
\begin{align}
k(\xbf - \xbf')
&= \nonumber
\int_{\Rbb^d} p(\omegabf) e^{i\,\omegabf\cdot(\xbf-\xbf')} d\,\omegabf 
\ = \ 
\Esp_{\omegabf\sim p} e^{i\,\omegabf\cdot(\xbf-\xbf')} \\
&=  \nonumber
\Esp_{\omegabf\sim p} \Big[\cos\big(\omegabf\cdot(\xbf{-}\xbf')\big) {+} i \sin\big(\omegabf\cdot(\xbf{-}\xbf')\big)\Big]\\
&= \label{eq:kernel_to_cos}
\Esp_{\omegabf\sim p} \cos\big(\omegabf\cdot(\xbf-\xbf')\big)\,.
\end{align}
\citet{rahimi-07} suggest expressing the above  
$\cos\!\big(\omegabf\cdot(\xbf{-} \xbf')\big)$
as a product of two features. 
One way to achieve this is to map every input example into 
\begin{equation} \label{eq:zcossin}
\zbf_\omegabf(\xbf) \ =\   \big( \cos(\omegabf\cdot\xbf), \ \sin (\omegabf\cdot\xbf)\big) \,.
\end{equation}
The random variable \mbox{$\zbf_\omegabf(\xbf) \cdot \zbf_\omegabf(\xbf')$}, with $\omegabf$ drawn from~$p$, is an unbiased estimate of $k(\xbf-\xbf')$. Indeed, we recover from Equation~\eqref{eq:kernel_to_cos} and Equation~\eqref{eq:zcossin}:
\begin{align*}
	\Esp_{\omegabf\sim p} & \zbf_\omegabf(\xbf) \cdot \zbf_\omegabf(\xbf')\
	\\[-1mm]
	 =&  
	\Esp_{\omegabf\sim p} \Big[\cos (\omegabf\cdot\xbf)  \cos (\omegabf\cdot\xbf') + \sin (\omegabf\cdot\xbf)  \sin (\omegabf\cdot\xbf')\Big] \\
	 =&  
	\Esp_{\omegabf\sim p} \cos\big(\omegabf\cdot(\xbf-\xbf')\big)\,.
\end{align*}
To reduce the variance in the estimation of \mbox{$k(\xbf{-} \xbf')$}, the idea is to sample $D$ points \iid\ from $p$: $\omegabf_1,\omegabf_2,\ldots, \omegabf_D$. Then, each training sample $\xbf_i\in\Rbb^d$ is mapped to a new feature vector in $\Rbb^{2D}$\,:
\begin{align}\label{eq:phibf_mapping}
\phibf(\xbf_i)  =  \frac1{\sqrt D} \Big( \cos&(\omegabf_1\cdot \xbf_i)\,,\ \ldots\,,\ \cos(\omegabf_D\cdot \xbf_i) \,,
\\[-2mm]  &\sin(\omegabf_1\cdot \xbf_i)\,,\ \ldots\,,\ \sin(\omegabf_D\cdot \xbf_i) \Big)\,.\nonumber
\end{align} 
Thus, we have 
$k(\xbf - \xbf') \approx \phibf(\xbf) \cdot \phibf(\xbf')\,$ when $D$ is ``large enough''.
This provides a decomposition of the PSD kernel $k$ that differs from the classical one~\citep[as discussed in][]{bach-17-equivalence}. By learning a linear predictor on the transformed training set \mbox{$S\mapsto\{(\phibf(\xbf_i), y_i)\}_{i=1}^n$} through an algorithm like a linear SVM, we recover a predictor equivalent to the one learned by a kernelized algorithm. 
That is, we learn a weight vector \mbox{$\wbf =  (w_1,\ldots,w_{2D}) \in \Rbb^{2D}$} and we predict the label of a sample $\xbf\in\Rbb^d$ by computing, in place of Equation~\eqref{eq:f_alpha_k},
\begin{equation} \label{eq:f_w_phi}
f(\xbf) \ = \ \sum_{j=1}^{2D} w_j \,\phi_j(\xbf)\,.
\end{equation}

\section{THE FOURIER TRANSFORM AS A PRIOR DISTRIBUTION}
\label{sec:rffprior}
As described in the previous section, the random Fourier features trick has been introduced to reduce the running time of kernel learning algorithms. 
Consequently, most of the subsequent work study and/or improve the properties of the kernel approximation \citep[\eg,][]{Yu16,Rudi17,bach-17-equivalence,Choromanski18} with some notable exceptions, 
as the \emph{kernel learning} algorithms of \cite{yang2015carte}, \citet{SinhaD16}, and \citet{oliva2016bayesian}, that we discuss and relate to our approach in Section~\ref{sec:revisited}.

We aim at reinterpreting the Fourier transform---\ie, the distribution $p$ of Equation~\eqref{eq:p_omega}---as a \emph{prior distribution} over the feature space.
It can be seen as an alternative representation of the prior knowledge that is encoded in the choice of a specific kernel function, that we denote $k_p$ from now on. 
In accordance with Equation~\eqref{eq:kernel_to_cos}, each feature obtained from a vector $\omegabf\in\Rbb^d$ can be seen as a hypothesis
$$h_\omegabf(\deltabf) \eqdef \cos(\omegabf \cdot \deltabf)\,.$$
Henceforth, the kernel is interpreted as a predictor performing a \mbox{$p$-weighed} aggregation of weak hypotheses.
This alternative interpretation of distribution $p$ as a prior over hypotheses naturally suggests to \emph{learn a posterior distribution} over the same hypotheses.
That is, we seek a distribution $q$ giving rise to a new kernel $$k_q(\deltabf) \eqdef \Esp_{\omegabf\sim q} h_\omegabf(\deltabf)\,.$$
In order to assess the quality of the kernel $k_q$, we define a loss function based on the consideration that its output should be high when two samples share the same label, and low otherwise. 
Hence, we evaluate the kernel on two samples \mbox{$(\xbf, y)\sim \Dgen$} and \mbox{$(\xbf', y')\sim \Dgen$} through the linear loss
\begin{equation} \label{eq:linloss}
\ell\big(k_q(\deltabf), \lambda\big)\ \eqdef \ 
\frac{1 - \lambda \,k_q(\deltabf)}2 \,,
\end{equation}
where $\deltabf= \xbf {-} \xbf'$ denotes a pairwise distance and $\lambda$ denotes the pairwise similarity measure:
$$\lambda=\lambda(y,y')\eqdots\left\{\begin{array}{rl} 1 & \mbox{ if $y=y'$,}\\
-1 & \mbox{ otherwise.}\end{array}\right.$$
Furthermore, we define the \emph{kernel alignment} generalization loss $\Lcal_\Delta(k_q) $ on a ``pairwise'' probability distribution $\Delta$, defined over $\Rbb^d{\times}[-1,1]$ as
\begin{equation} \label{eq:genloss1}
\Lcal_\Delta(k_q) \,\eqdef \Esp_{(\deltabf,\lambda)\sim \Delta}
\ell\big(k_q(\deltabf), \lambda\big)\,.
\end{equation}
Note that any data generating distribution $\Dgen$ over \mbox{input-output} spaces $\Rbb^d\times Y$ automatically gives rise to a ``pairwise'' distribution $\Delta_\Dcal$.  By a slight abuse of notation, we write $\Lcal_\Dcal(k_q)$ the corresponding generalization loss, and the associated  \emph{kernel alignment} empirical loss is defined as 
\begin{equation} \label{eq:emploss1}
\widehat\Lcal_{S}(k_q) \eqdef 
\frac{1}{n(n-1)}
\ \sum_{i,j=1, i\ne j}^{n}
\ \ell\big(k_q(\deltabf_{ij}), \lambda_{ij}\big)\,,
\end{equation}
where for a pair of examples \mbox{$\left\{(\xbf_i,y_i),(\xbf_j,y_j)\right\}\in S^2$} we have \mbox{$\deltabf_{ij}\eqdef(\xbf_i-\xbf_j)$} and \mbox{$\lambda_{ij}\eqdef \lambda(y_i,y_j)$}.

Starting from this reinterpretation of the Fourier transform, we provide in the rest of the paper two PAC-Bayesian analyses.
 The first one (Section~\ref{sec:landmark}) is obtained by combining $n$ PAC-Bayesian bounds: instead of considering all the possible pairs of data points, we fix one point and we study the generalization ability for all the pairs involving it.
The second analysis (Section~\ref{sec:revisited}) is based on the fact that the loss can be expressed as a second-order U-statistics.

\section{PAC-BAYESIAN ANALYSIS AND LANDMARKS}
\label{sec:landmark}

Due to the linearity of the loss function $\loss$, we can rewrite the loss of $k_q$ as the \mbox{$q$-average} loss of every hypothesis. Indeed, Equation~\eqref{eq:genloss1} becomes
\begin{align*}
\Lcal_\Dcal(k_q)\ &=\ 
\Esp_{(\deltabf,\lambda)\sim \Delta_\Dcal} \ell\Big(\Esp_{\omegabf\sim q} h_\omegabf(\deltabf), \lambda\Big) \\
& = \
\Esp_{\omegabf\sim q} \ \Esp_{(\deltabf,\lambda)\sim \Delta_\Dcal }
\ell(h_\omegabf(\deltabf), \lambda) 
\, 
=  \Esp_{\omegabf\sim q} \ \Lcal_\Dcal (h_\omegabf)
\,.
\end{align*}
The above \mbox{$q$-expectation} of losses $\Lcal_\Dcal (h_\omegabf)$ turns out to be the quantity bounded by most PAC-Bayesian generalization theorems 
(sometimes referred as the \emph{Gibbs risk} in the literature),
excepted that such results usually apply to the loss over samples instead of distances.
Hence, we use PAC-Bayesian bounds to obtain generalization guarantees on $\Lcal_\Dcal(k_q)$ from its empirical estimate of Equation~\eqref{eq:emploss1}, that we can rewrite as
\begin{align*}
\widehat\Lcal_S(k_q)=  
\frac{1}{n^2{-}n} 
\hspace{-1mm} \sum_{i,j=1; i\ne j}^{n}
\hspace{-3mm} \ell\Big(\Esp_{\omegabf\sim q} h_\omegabf(\deltabf), \lambda_{ij}\Big)\!
= \hspace{-1mm} \Esp_{\omegabf\sim q} 
\hspace{-1mm} \widehat\Lcal_{S} (h_\omegabf)
\,.
\end{align*}
However the \emph{classical} PAC-Bayesian theorems cannot be applied directly to bound $\Lcal_\Dcal(k_q)$, as the empirical loss $\widehat\Lcal_S(k_q)$ would require to be computed from \iid\ observations of $\Delta_\Dgen$.
Instead, the empirical loss involves \emph{dependent} samples, as it is computed from \mbox{$n^2{-}n$} pairs formed by $n$ elements from~$\Dgen$. 

\subsection{First Order $\KL$-Bound}

 A straightforward approach to  apply \emph{classical} PAC-Bayesian results is to bound separately the loss associated with each training sample. That is, for each $(\xbf_i, y_i) \in S$, we define 
\begin{align} \label{eq:Lcali}
	\Lcal^i_{\Dcal}(h_\omegabf) &\eqdef 
	\Esp_{(\xbf,y)\sim \Dcal} \ell\Big( h_\omegabf(\xbf_i-\xbf), \lambda(y_i,y)\Big)\,,\\
	\nonumber
	\mbox{and }\widehat\Lcal^i_{S}(h_\omegabf) &\eqdef 
	\frac{1}{n{-}1}\sum_{j=1, j\neq i}^n\!\!\ell\Big( h_\omegabf(\xbf_i-\xbf_j), \lambda(y_i,y_j)\Big)\,.
\end{align}
Thus, the next theorem gives a generalization guarantee on $\Lcal^i_\Dcal(k_q)$ relying namely on the empirical estimate $\widehat\Lcal^i_{S}(k_q)$ and the Kullback-Leibler divergence 
\mbox{$\KL(q\|p) = \Esp_{\omegabf\sim q} \ln \frac{q(\omegabf)}{p(\omegabf)}$} between the prior~$p$ and the learned posterior~$q$. Note that the statement of Theorem~\ref{thm:pb1} is obtained straightforwardly from~\citet[][\mbox{Theorem~4.1} and \mbox{Lemma~1}]{alquier-16}, but can be recovered easily from~\cite{lever-13}.

\begin{thm} 
	\label{thm:pb1}
	For $t>0$,   $i\in\{1,\ldots,n\}$,  and a prior distribution $p$ over $\Rbb^d$, with probability $1{-}\varepsilon$ over the choice of~$S\sim \Dgen^{n}$, we have for all $q$ on $\Rbb^d$\,:
	\begin{equation*}
	\Lcal^i_{\Dgen}(k_q)  \leq \widehat\Lcal^i_{S}(k_q) + \frac{1}{t}\left( \KL(q\|p) + \frac{t^2}{2(n{-}1)} + \ln\frac1\varepsilon\right).
	\end{equation*}
\end{thm}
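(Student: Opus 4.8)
The plan is to reduce the statement to a single application of a classical PAC-Bayesian theorem by exploiting the fact that, once the anchor point $(\xbf_i,y_i)$ is held fixed, the pairwise empirical loss $\widehat\Lcal^i_S(h_\omegabf)$ becomes an average of genuinely \iid\ terms. Concretely, I would condition on $(\xbf_i,y_i)$ and observe that the remaining $n{-}1$ examples $\{(\xbf_j,y_j)\}_{j\ne i}$ are \iid\ draws from $\Dgen$ that are independent of $(\xbf_i,y_i)$. For each fixed $\omegabf$, define the bounded per-example loss $g_\omegabf(\xbf,y)\eqdef\ell\big(h_\omegabf(\xbf_i{-}\xbf),\lambda(y_i,y)\big)\in[0,1]$; then $\Lcal^i_\Dgen(h_\omegabf)=\Esp_{(\xbf,y)\sim\Dgen}g_\omegabf(\xbf,y)$ is its expectation and $\widehat\Lcal^i_S(h_\omegabf)=\frac1{n-1}\sum_{j\ne i}g_\omegabf(\xbf_j,y_j)$ is its empirical mean over $n{-}1$ independent observations. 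This is exactly the configuration required by a classical PAC-Bayesian bound, with the data-independent prior $p$, a sample of effective size $n{-}1$, and a loss taking values in $[0,1]$.

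Second, I would invoke the change-of-measure (Donsker--Varadhan) inequality with the test function $\omegabf\mapsto t\big(\Lcal^i_\Dgen(h_\omegabf)-\widehat\Lcal^i_S(h_\omegabf)\big)$, which yields, simultaneously for every posterior $q$,
\begin{equation*}
t\,\Esp_{\omegabf\sim q}\!\big(\Lcal^i_\Dgen(h_\omegabf)-\widehat\Lcal^i_S(h_\omegabf)\big)\ \le\ \KL(q\|p)+\ln\Esp_{\omegabf\sim p}e^{\,t(\Lcal^i_\Dgen(h_\omegabf)-\widehat\Lcal^i_S(h_\omegabf))}.
\end{equation*}
The exponential moment on the right is then controlled by a Markov step (valid with probability $1{-}\varepsilon$ over the $n{-}1$ conditional samples), followed by Fubini to exchange $\Esp_{\omegabf\sim p}$ with the sample expectation, and finally a Hoeffding-type estimate applied to each \iid\ average: since $g_\omegabf\in[0,1]$, the conditional sample expectation of the exponential is at most $\exp\!\big(t^2/(2(n{-}1))\big)$. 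This last estimate is precisely the content of Alquier's Lemma~1, and combining it with the Donsker--Varadhan step is their Theorem~4.1.

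Finally, I would rearrange the resulting inequality, dividing through by $t$, to obtain with probability $1{-}\varepsilon$ (conditionally on $(\xbf_i,y_i)$)
\begin{equation*}
\Esp_{\omegabf\sim q}\Lcal^i_\Dgen(h_\omegabf)\ \le\ \Esp_{\omegabf\sim q}\widehat\Lcal^i_S(h_\omegabf)+\frac1t\Big(\KL(q\|p)+\tfrac{t^2}{2(n-1)}+\ln\tfrac1\varepsilon\Big),
\end{equation*}
and then use the linearity of $\ell$ in its first argument to replace the $q$-averages of hypothesis losses by the kernel losses, i.e.\ $\Esp_{\omegabf\sim q}\Lcal^i_\Dgen(h_\omegabf)=\Lcal^i_\Dgen(k_q)$ and $\Esp_{\omegabf\sim q}\widehat\Lcal^i_S(h_\omegabf)=\widehat\Lcal^i_S(k_q)$, which is exactly the claimed bound. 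The one point demanding care---and the main conceptual obstacle---is the conditioning argument: the raw pairwise loss mixes dependent pairs, so the reduction to \iid\ samples only works because we have pinned a single index $i$; I would close the argument by noting that the conditional failure probability is at most $\varepsilon$ uniformly over the value of $(\xbf_i,y_i)$, so integrating over its law preserves the $1{-}\varepsilon$ guarantee unconditionally over $S\sim\Dgen^n$.
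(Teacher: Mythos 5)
Your proposal is correct and follows essentially the same route as the paper, which obtains Theorem~\ref{thm:pb1} directly from Theorem~4.1 and Lemma~1 of \citet{alquier-16}: condition on the anchor $(\xbf_i,y_i)$ so that the $n{-}1$ remaining pairs are \iid, apply the change-of-measure plus Markov plus Hoeffding argument with effective sample size $n{-}1$, and use linearity of $\ell$ to pass from the Gibbs risk to the kernel loss. Your explicit treatment of the conditioning and the final integration over the law of $(\xbf_i,y_i)$ fills in a step the paper leaves implicit, but it is the same argument.
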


By the union bound, and using the fact that 
$\Lcal_\Dcal(k_q) = \Esp_{(\xbf_i, y_i)\sim \Dcal} \Lcal^i_{\Dcal}(k_q) $, we prove the following corollary in the supplementary material. 
\begin{cor} 
	\label{cor:pb1}
	For $t>0$ and a prior distribution $p$ over $\Rbb^d$, with probability $1{-}\varepsilon$ over the choice of~$S\sim \Dgen^{n}$, we have for all $q$ on $\Rbb^d$\,:
	\begin{equation*}
	\Lcal_{\Dgen}(k_q)  \leq \widehat\Lcal_{S}(k_q) + \frac{2}{t}\!\left( \KL(q\|p) + \frac{t^2}{2(n{-}1)} + \ln\frac{n{+}1}\varepsilon\right).
	\end{equation*}
\end{cor}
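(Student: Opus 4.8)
The plan is to combine the $n$ instances of Theorem~\ref{thm:pb1} (one per anchor index $i$) with one additional PAC-Bayesian bound, exploiting the decomposition $\Lcal_\Dcal(k_q) = \Esp_{(\xbf_i, y_i)\sim\Dcal}\Lcal^i_\Dcal(k_q)$. The subtle point is that summing the per-anchor guarantees does \emph{not} immediately reproduce $\Lcal_\Dcal(k_q)$: averaging Theorem~\ref{thm:pb1} over $i$ only controls the \emph{empirical} anchor average $\frac1n\sum_i \Lcal^i_\Dcal(k_q)$, whereas the target quantity is its \emph{expectation} over a fresh anchor. Bridging that gap is exactly where the extra factor~$2$ and the $\ln\frac{n+1}\varepsilon$ term originate.

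First I would apply Theorem~\ref{thm:pb1} with confidence parameter $\varepsilon' = \frac{\varepsilon}{n+1}$ to each $i\in\{1,\dots,n\}$ and take a union bound, so that simultaneously for all $i$ and all $q$,
$$\Lcal^i_\Dcal(k_q) \le \widehat\Lcal^i_S(k_q) + \frac1t\Big(\KL(q\|p) + \tfrac{t^2}{2(n-1)} + \ln\tfrac{n+1}\varepsilon\Big)$$
holds with probability at least $1-\frac{n}{n+1}\varepsilon$. Averaging over $i$ and using that $\frac1n\sum_i \widehat\Lcal^i_S(k_q) = \widehat\Lcal_S(k_q)$ (which follows by comparing \eqref{eq:emploss1} with \eqref{eq:Lcali}) then bounds $\frac1n\sum_i \Lcal^i_\Dcal(k_q)$ by $\widehat\Lcal_S(k_q)$ plus a single copy of the complexity term.

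Next, to pass from the empirical anchor average to $\Lcal_\Dcal(k_q)$, I would observe that for a \emph{fixed} hypothesis $\omegabf$ the quantities $\{\Lcal^i_\Dcal(h_\omegabf)\}_{i=1}^n$ are i.i.d.\ functions of the individual training points $(\xbf_i,y_i)$, each bounded in $[0,1]$ (since $\ell\in[0,1]$), with common mean $\Esp_{(\xbf,y)\sim\Dcal}\Lcal^{(\xbf,y)}_\Dcal(h_\omegabf)=\Lcal_\Dcal(h_\omegabf)$. Thus $(\xbf_i,y_i)\mapsto\Lcal^i_\Dcal(h_\omegabf)$ itself plays the role of a bounded loss, and applying the same PAC-Bayesian inequality underlying Theorem~\ref{thm:pb1} (\citet[Theorem~4.1 and Lemma~1]{alquier-16}) with prior $p$, $n$ i.i.d.\ samples, and confidence $\varepsilon'$ gives, with probability at least $1-\frac{\varepsilon}{n+1}$, for all $q$,
$$\Lcal_\Dcal(k_q) \le \frac1n\sum_{i=1}^n \Lcal^i_\Dcal(k_q) + \frac1t\Big(\KL(q\|p) + \tfrac{t^2}{2n} + \ln\tfrac{n+1}\varepsilon\Big),$$
where I used $\Esp_{\omegabf\sim q}\Lcal_\Dcal(h_\omegabf)=\Lcal_\Dcal(k_q)$ on the left and $\Esp_{\omegabf\sim q}\frac1n\sum_i\Lcal^i_\Dcal(h_\omegabf)=\frac1n\sum_i\Lcal^i_\Dcal(k_q)$ on the right.

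Finally I would add the two displays, upper bound $\frac{t^2}{2n}\le\frac{t^2}{2(n-1)}$ to merge the two complexity terms into the stated shape, and take a union bound over all $n+1$ events, which fail with total probability at most $(n+1)\varepsilon'=\varepsilon$. This yields precisely the factor~$2$ in front of the bracket and the $\ln\frac{n+1}\varepsilon$ term of Corollary~\ref{cor:pb1}. The main obstacle is the second step: one must recognize that the per-anchor generalization losses form an i.i.d.\ sample of a genuine $[0,1]$-valued loss, so that a \emph{second} PAC-Bayesian bound with the same prior $p$ can control the sampling of the anchors---a naive Hoeffding argument would not suffice, since the posterior $q$ is data-dependent.
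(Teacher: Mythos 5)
Your proof is correct and matches the paper's own argument essentially step for step: the paper likewise applies Theorem~\ref{thm:pb1} to each of the $n$ anchors and, to bridge from $\frac1n\sum_i\Lcal^i_\Dcal(k_q)$ to $\Lcal_\Dcal(k_q)=\Esp_{(\xbf',y')\sim\Dcal}\Lcal'_\Dcal(k_q)$, invokes one additional PAC-Bayesian bound (with deviation term $\frac{t^2}{2n}$) treating $(\xbf_i,y_i)\mapsto\Lcal^i_\Dcal(h_\omegabf)$ as a bounded i.i.d.\ loss, before combining all $n+1$ events with confidence $\frac{\varepsilon}{n+1}$ each. Your identification of why a second PAC-Bayes bound (rather than a Hoeffding bound) is needed for the anchor-sampling step is exactly the implicit content of the paper's proof.
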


\paragraph{Pseudo-Posterior for $\KL$-bounds.}
Since the above result
is valid for any distribution~$q$, one can compute the bound for any learned posterior distribution.
Note that the bound promotes the minimization of a trade-off---parameterized by a constant $t$---between the empirical loss $\widehat\Lcal_{S}(k_q)$ and the $\KL$-divergence between the prior $p$ and the posterior~$q$:
$$
\widehat\Lcal_{S}(k_q)  + \frac2{t}\,\KL(q\|p)\,.$$
It is well-known that for fixed $t$, $p$ and $S$, the minimum bound value is obtained with the \emph{pseudo-Bayesian} posterior $q^*$, such that for $\omegabf\in \Rbb^d$,
\begin{equation}\label{eq:post_catoni}
q^*(\omegabf) \, = \, \frac1{Z}\,p(\omegabf) \,\exp{\left(-\tau\, \widehat\Lcal_{S}(h_\omegabf )	\right) }\,,
\end{equation}
where $\tau\eqdots\frac12 t$ and $Z$ is a normalization constant.\footnote{This trade-off is the same one involved in some other PAC-Bayesian bounds for \iid~data \citep[\eg,][]{catoni-07}.
	As discussed in \cite{zhang-06,grunwald-2012,germain-2016}, there is a similarity between the minimization of such PAC-Bayes bounds and the Bayes update rule.}
Note also Corollary~\ref{cor:pb1}'s bound converges to the generalization loss $\Lcal_{\Dgen}(k_q)$ at rate $O\Big(\sqrt{\frac{\ln n}{n}}\Big)$ for the parameter choice $t=\sqrt{n\ln n}$.

Due to the continuity of the feature space, the pseudo-posterior of Equation~\eqref{eq:post_catoni} is hard to compute. To estimate it, one may make use of Monte Carlo~\citep[\eg,][]{dalalyan12} or variational Bayes methods~\citep[\eg,][]{alquier-16}.
In this work, we explore a simpler method:
we work solely from a discrete probability space.

\subsection{Landmarks-Based Learning}
\label{sec:landmarks-based}
We now propose to leverage on the fact that Theorem~\ref{thm:pb1} bounds the kernel function for the distances to a single data point, instead of learning a kernel globally for every data point as in Corollary~\ref{cor:pb1}.
We thus aim at learning a collection of kernels (which we can also interpret as similarity functions) for a subset of the training points.
We call \emph{landmarks} these training points.
The aim of this approach is to learn a new representation of the input space, mapping the data-points into compact feature vectors, from  which we can learn a simple predictor.

Concretely, along with the learning sample $S$ of $n$ examples \iid\ from $\Dgen$, we consider a landmarks sample $L=\{(\xbf_l,y_l)\}_{l=1}^{n_L}$ of $n_L$  points \iid\ from $\Dgen$, and a \emph{prior} Fourier transform distribution $p$. For each landmark $(\xbf_l,y_l)\in L$, let sample $D$ points from~$p$, denoted $\Omega^L = \{\omegabf^l_m\}_{m=1}^D \sim p^D$. Then, consider a uniform distribution $P$ on the discrete hypothesis set $\Omega^L$, such that $P(\omegabf^l_m) = \frac1D$ and $h^l_m(\deltabf) \eqdef \cos(\omegabf^l_m\cdot\deltabf)$. 
We aim at learning a set of kernels 
$\{\kQl\}_{l=1}^{n_L}$, where each $\kQl$
is obtained from a distinct $\xbf_l\in L$ with a fixed parameter $\beta > 0$, by computing the pseudo-posterior distribution~$Q^l$ given by
\begin{equation} \label{eq:discrete_pseudoQ}
Q^l_m \, = \, \frac1{Z_l}\,\exp\Big(-\beta\sqrt{n} \,
\widehat\Lcal^l_{S}(h^l_m)
\Big)\,,
\end{equation}
for $ m{=}1,\ldots, D$\,; $Z_l$ being the normalization constant. Note that Equation~\eqref{eq:discrete_pseudoQ} gives the minimum of Theorem~\ref{thm:pb1} with
$t=\beta\sqrt{n}$. That is, $\beta=1$ corresponds to the regime where the bound converges.
Moreover, similarly to Corollary~\ref{cor:pb1}, generalization guarantees are obtained simultaneously for the $n_L$ computed distributions thanks to the union bound and Theorem~\ref{thm:pb1}. Thus, with probability \mbox{$1{-}\varepsilon$}, for all $\{Q^l\}_{l=1}^{n_L}$:
\begin{equation*}
\Lcal^l_\Dcal(\kQl)  \leq  \widehat\Lcal^l_{S}(\kQl) {+} \frac{1}{t}\!\left( \!\KL(Q^l\|P) {+} \frac{t^2}{2(n{-}1)} {+}\ln\!\frac{n_L}\varepsilon\right)\!,
\end{equation*}
where $ \KL(Q^l\|P) = \ln D + \sum_{j=1}^D Q^l_j \ln Q^l_j$.

Once all pseudo-posterior are computed thanks to Equation~\eqref{eq:discrete_pseudoQ}, our landmarks-based approach is to map samples $\xbf\in\Rbb^d$ to $n_L$ similarity features:
\begin{equation}\label{eq:psimap}
\psibf(\xbf)\eqdef \Big( \widehat{k}_{Q^1}(\xbf_1 {-} \xbf), \ldots, \widehat{k}_{Q^{n_L}}(\xbf_{n_L} {-} \xbf)   \Big),
\end{equation}
and to learn a linear predictor on the transformed training set.
Note that, this mapping is not a kernel map anymore and is somehow similar to the mapping proposed by~\citet{BalcanBS08ML,BalcanBS08COLT,zantedeschi2018multiview}
for a similarity function that is more general than a kernel but fixed for each landmark.

\section{LEARNING KERNEL (REVISITED)}
\label{sec:revisited}

In this section, we present PAC-Bayesian theorems that directly bound the \emph{kernel alignment} generalization loss $\Lcal_\Dcal(k_q)$ on a ``pairwise'' probability distribution $\Delta_\Dcal$---as defined by Equation~\eqref{eq:genloss1}---even if the empirical loss $\widehat\Lcal_\Dcal(k_q)$ is computed on dependent samples. These bounds suggest a \emph{kernel alignment} (or \emph{kernel learning}) strategy similar to the one of \citet{SinhaD16}. 
We stress that our guarantees hold solely for the kernel alignment loss, but not for the predictor trained with this kernel. Hence, our proposed algorithm learns a kernel independently of the prediction method to be used downstream. This is in contrast with the \emph{one-step} frameworks of \cite{yang2015carte} and \cite{oliva2016bayesian}, which learn a mixture of random kernel features in a \emph{fully} Bayesian way; they rely on a data-generating model, whereas our approach assumes only that the observations are \iid

\subsection{Second Order $\KL$-bound} 

The following result 
is based on the fact that 
$\widehat\Lcal_{S}(h_\omegabf)\eqdef\frac{1}{n^2-n} \sum_{i\ne j}^{n}
\ell(h_\omegabf(\deltabf_{ij}), \lambda_{ij})$ is an \emph{unbiased} \mbox{second-order} estimator of 
$\Esp_{(\deltabf,\lambda)\sim \Delta_\Dcal }
\ell(h_\omegabf(\deltabf), \lambda)$, allowing us to build on the PAC-Bayesian analysis for \mbox{U-statistics} of~\citet[][Theorem~7]{lever-13}.
Indeed, the next theorem gives a generalization guarantee on the kernel alignment loss $\Lcal_\Dcal(k_q)$.

\begin{thm}[\citealt{lever-13}]  \label{thm:ustats}
	For $t>0$  and a prior distribution $p$ over $\Rbb^d$, with probability $1{-}\varepsilon$ over the choice of~$S\sim \Dgen^{n}$, we have for all $q$ on~$\Rbb^d$\,:
	\begin{equation*}
	\Lcal_{\Dgen}(k_q)  \leq \widehat\Lcal_{S}(k_q) + \frac{1}{t}\left( \KL(q\|p) + \frac{t^2}{2n} + \ln\frac1\epsilon\right).
	\end{equation*}
\end{thm}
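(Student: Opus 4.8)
The plan is to follow the PAC-Bayesian argument for U-statistics of \citet[Theorem~7]{lever-13}, specialized to the kernel-alignment loss. The starting point is the linearity identities already established above, $\Lcal_\Dgen(k_q) = \Esp_{\omegabf\sim q}\Lcal_\Dgen(h_\omegabf)$ and $\widehat\Lcal_S(k_q) = \Esp_{\omegabf\sim q}\widehat\Lcal_S(h_\omegabf)$, which reduce the task to controlling the $q$-averaged gap $\Esp_{\omegabf\sim q}[\Lcal_\Dgen(h_\omegabf) - \widehat\Lcal_S(h_\omegabf)]$ uniformly over all posteriors $q$. The crucial structural fact is that, for each fixed $\omegabf$, the empirical loss $\widehat\Lcal_S(h_\omegabf) = \frac{1}{n^2-n}\sum_{i\ne j}\ell(h_\omegabf(\deltabf_{ij}),\lambda_{ij})$ is a second-order U-statistic with a \emph{symmetric} kernel $g(z,z') = \ell(h_\omegabf(\xbf-\xbf'),\lambda(y,y'))$ bounded in $[0,1]$ (since $h_\omegabf=\cos\in[-1,1]$ and $\lambda\in\{-1,1\}$), whose expectation is exactly $\Lcal_\Dgen(h_\omegabf)$.

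First I would invoke the change-of-measure (Donsker--Varadhan) inequality: for any fixed sample $S$ and any $q$,
$$t\,\Esp_{\omegabf\sim q}\!\big[\Lcal_\Dgen(h_\omegabf) - \widehat\Lcal_S(h_\omegabf)\big] \ \leq\ \KL(q\|p) + \ln\Esp_{\omegabf\sim p}\exp\!\big(t[\Lcal_\Dgen(h_\omegabf) - \widehat\Lcal_S(h_\omegabf)]\big).$$
Because all the $q$-dependence on the right sits in the $\KL$ term, this single inequality already holds simultaneously for every $q$. Next I would apply Markov's inequality to the $S$-random quantity $\Esp_{\omegabf\sim p}\exp(\cdots)$, so that with probability at least $1-\varepsilon$ it is bounded by $\tfrac1\varepsilon$ times its expectation over $S$; Fubini then lets me move the $S$-expectation inside the $p$-expectation, leaving me to bound, for each fixed $\omegabf$, the moment-generating function $\Esp_S\exp\!\big(t[\Lcal_\Dgen(h_\omegabf) - \widehat\Lcal_S(h_\omegabf)]\big)$.

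The heart of the argument, and the step that departs from the standard \iid\ PAC-Bayes proof, is bounding this MGF despite the statistical dependence among the $n^2-n$ pairs. Here I would use Hoeffding's decomposition of a U-statistic: $\widehat\Lcal_S(h_\omegabf)$ can be written as an average over the $n!$ permutations of the sample of quantities $W_\sigma$, each of which is a plain average of $\lfloor n/2\rfloor$ \emph{independent} terms obtained by pairing disjoint examples. Convexity of the exponential (Jensen's inequality) then bounds the MGF of the U-statistic by the MGF of a single such $W$, an average of $\lfloor n/2\rfloor$ \iid\ variables bounded in $[0,1]$; Hoeffding's lemma gives the sub-Gaussian estimate $\Esp_S\exp\!\big(t[\Lcal_\Dgen(h_\omegabf) - \widehat\Lcal_S(h_\omegabf)]\big) \leq \exp\!\big(\tfrac{t^2}{2n}\big)$, where the effective sample size $\lfloor n/2\rfloor$ of order $n$ (rather than the naive $n^2$) accounts for the $\tfrac{t^2}{2n}$ term. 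Substituting this into the Markov step and rearranging, dividing by $t$ and moving $\widehat\Lcal_S(k_q)$ across, produces the claimed inequality. The main obstacle is precisely this U-statistic MGF bound: treating the pairs as independent fails, and one must exploit the Hoeffding decomposition to recover the correct $n$-dependence.
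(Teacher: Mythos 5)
Your proposal is correct, and it is essentially the argument of the result the paper relies on: the paper gives no proof of Theorem~\ref{thm:ustats} itself, importing it directly from \citet[Theorem~7]{lever-13}, whose proof is exactly the chain you describe --- change of measure, Markov plus Fubini, and the Hoeffding permutation decomposition of the second-order U-statistic into averages of $\lfloor n/2\rfloor$ independent pairs, which yields an MGF bound of $\exp\bigl(t^2/(8\lfloor n/2\rfloor)\bigr)\leq\exp\bigl(t^2/(2n)\bigr)$ for a $[0,1]$-valued symmetric kernel. Your identification of the symmetry of $g$ (via the evenness of the cosine) and of the effective sample size of order $n$ rather than $n^2$ is precisely the point that makes the dependent-pairs issue go away, so nothing is missing.
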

Except for some constant terms, the above Theorem~\ref{thm:ustats} is similar to Corollary~\ref{cor:pb1}. Indeed, both are minimized by the same pseudo-posterior $q^*$ (Equation~\ref{eq:post_catoni}, with $\tau\eqdots t$ for Theorem~\ref{thm:ustats}). Interestingly, we get rid of the $\ln (n+1)$ term of Corollary~\ref{cor:pb1}, making in Theorem~\ref{thm:ustats}'s bound to converge at rate $O(\frac{1}{\sqrt{n}})$ when $t=\sqrt{n}$.

\subsection{Second Order Bounds for \mbox{$f$-Divergences}} 
In the following, we build on a recent result of~\citet{alquier-2018} to express a new family of PAC-Bayesian bounds for our dependent samples, where the $\KL$ term is replaced by other \mbox{$f$-divergences}. 

Given a convex function $f$ such that $f(1){=}0$, a \mbox{$f$-divergence} is given by
$D_f(q\|p) \eqdots \Esp_{\omegabf\sim p} f\big( \frac{q(\omegabf)}{p(\omegabf)}\big).$
The following theorem applies to \mbox{$f$-divergences} such that $f(x)=x^\etamu -1$.
\begin{thm} \label{thm:ustats2}
	For $\etamu>1$  and a prior distribution $p$ over $\Rbb^d$, with probability $1{-}\varepsilon$ over the choice of~$S\sim \Dgen^{n}$, we have for all $q$ on $\Rbb^d$\,:
	\begin{align*}
	&\Lcal_\Dgen(k_q)  \leq \widehat\Lcal_{S}(k_q)  \\
	&+\begin{cases}
	\left(\frac1{2\sqrt{n}} \right)^{\!\etamu-1}
	\!\!\Big(\!D_\etamu(q\|p)+1\Big)^\frac1\etamu \!\left(\frac{1}{ \varepsilon}\right)^{\!1{-}\frac1\etamu}\!\! & \mbox{if $1<\etamu\leq 2$,}\\[2mm]
	\left(\frac1{4{n}} \right)^{\!1-\frac1\etamu}
	\!\!\Big(\!D_\etamu(q\|p)+1\Big)^\frac1\etamu \!\left(\frac{1}{ \varepsilon}\right)^{\!1{-}\frac1\etamu}\!\! & \mbox{if $\etamu> 2$,}
	\end{cases}
	\end{align*}
	where \,
	$\displaystyle    D_\etamu(q\|p) \eqdots \Esp_{\omegabf\sim p} \left( \frac{q(\omegabf)}{p(\omegabf)}\right)^\etamu \! - 1\,.$
\end{thm}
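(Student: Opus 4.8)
The plan is to instantiate the general $f$-divergence PAC-Bayesian inequality of \citet{alquier-2018} for our U-statistic estimator $\widehat\Lcal_S(h_\omegabf)$, thereby reducing the whole statement to a single moment estimate that I then control through the U-statistic structure. First I would reuse the linearity of $\loss$ exactly as in the reduction preceding Theorem~\ref{thm:ustats}, writing $\Lcal_\Dgen(k_q)=\Esp_{\omegabf\sim q}\Lcal_\Dgen(h_\omegabf)$ and $\widehat\Lcal_S(k_q)=\Esp_{\omegabf\sim q}\widehat\Lcal_S(h_\omegabf)$, so that it suffices to bound the $q$-averaged gap $\Esp_{\omegabf\sim q}\Delta_S(\omegabf)$, where $\Delta_S(\omegabf)\eqdef\Lcal_\Dgen(h_\omegabf)-\widehat\Lcal_S(h_\omegabf)$.

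The core inequality comes from a change of measure followed by H\"older with the conjugate exponents $\mu$ and $\mu'=\tfrac{\mu}{\mu-1}$:
\begin{align*}
\Esp_{\omegabf\sim q}\Delta_S(\omegabf)
&= \Esp_{\omegabf\sim p}\tfrac{q(\omegabf)}{p(\omegabf)}\,\Delta_S(\omegabf) \\
&\le \big(D_\mu(q\|p)+1\big)^{1/\mu}\Big(\Esp_{\omegabf\sim p}|\Delta_S(\omegabf)|^{\mu'}\Big)^{1/\mu'},
\end{align*}
where the first factor is exactly $(D_\mu(q\|p)+1)^{1/\mu}$ by the definition of $D_\mu$. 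To turn the random quantity $Y_S\eqdef\Esp_{\omegabf\sim p}|\Delta_S(\omegabf)|^{\mu'}$ into a high-probability statement uniform in $q$, I would apply Markov's inequality to $Y_S$: with probability $1-\varepsilon$ we have $Y_S\le\tfrac1\varepsilon\Esp_S Y_S$, which produces the factor $(1/\varepsilon)^{1/\mu'}=(1/\varepsilon)^{1-1/\mu}$ common to both cases. By Tonelli, $\Esp_S Y_S$ equals the data-independent averaged moment $\mathcal M_{\mu'}\eqdef\Esp_{\omegabf\sim p}\Esp_S|\Delta_S(\omegabf)|^{\mu'}$, so the theorem is reduced to estimating $\mathcal M_{\mu'}$, and the two branches of the statement arise solely from how $\mathcal M_{\mu'}$ is controlled in the regimes $\mu'\ge 2$ and $1<\mu'<2$.

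The remaining step is the moment bound. Here I would use that, for each fixed $\omegabf$, the estimator $\widehat\Lcal_S(h_\omegabf)$ is an \emph{unbiased second-order U-statistic} whose kernel takes values in $[0,1]$, so $\Delta_S(\omegabf)$ is centered and satisfies $|\Delta_S(\omegabf)|\le 1$. Through Hoeffding's representation of the U-statistic as a permutation average of averages of $\lfloor n/2\rfloor$ \emph{independent} bounded terms, convexity of $t\mapsto|t|^{\mu'}$ lets me dominate $\Esp_S|\Delta_S(\omegabf)|^{\mu'}$ by the centered $\mu'$-th moment of a single i.i.d.\ block average, whose variance is of order $1/n$. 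For $1<\mu\le 2$ (hence $\mu'\ge 2$) I would invoke the boundedness $|\Delta_S|^{\mu'}\le\Delta_S^2$ to collapse the moment onto the variance, yielding the $\big(\tfrac{1}{2\sqrt n}\big)^{\mu-1}$ factor; for $\mu>2$ (hence $1<\mu'<2$) the moment cannot be reduced to the variance by boundedness, and I would instead rely on the moment inequality underlying \citet{alquier-2018}'s result for this divergence family, yielding the $\big(\tfrac{1}{4n}\big)^{1-1/\mu}$ factor.

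I expect the $\mu>2$ regime to be the main obstacle. Controlling a sub-quadratic ($\mu'<2$) moment of a centered U-statistic by the stated quantity is precisely where the naive variance-plus-boundedness route is insufficient, so the delicate points are (i) extracting the correct dependence on $n$ in this regime and (ii) tracking the exact constants $\tfrac{1}{2\sqrt n}$ and $\tfrac{1}{4n}$ together with the H\"older bookkeeping, so that the exponents $\mu-1$ and $1-1/\mu$ come out correctly and the two cases agree at $\mu=2$.
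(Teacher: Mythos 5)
Your skeleton---change of measure plus H\"older to make $D_\mu(q\|p)$ appear, Markov's inequality to produce the $(1/\varepsilon)^{1-1/\mu}$ factor, and a reduction to an averaged moment $\Esp_{\omegabf\sim p}\Esp_{S'}|\Lcal_\Dgen(h_\omegabf)-\widehat\Lcal_{S'}(h_\omegabf)|^{(\cdot)}$ controlled through the variance of the second-order U-statistic---is the same as the paper's, which invokes \citet[Theorem~1]{alquier-2018} and bounds that variance by $\tfrac1{4n}$ via the Efron--Stein / bounded-differences route (increments of size $\tfrac1n$; your Hoeffding block decomposition would only give $\tfrac{1}{4\lfloor n/2\rfloor}$ and lose a factor $\sqrt2$). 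The decisive divergence is the \emph{order} of the moment you end up having to control. H\"older pairs $(D_\mu(q\|p)+1)^{1/\mu}$ with the conjugate moment $\Mcal_{\mu'}=\Esp_{\omegabf\sim p}\Esp_{S'}|\cdot|^{\mu'}$, $\mu'=\tfrac{\mu}{\mu-1}$, which is what you write; the paper instead uses the inequality in the form $(\Mcal_\mu/\varepsilon)^{1-1/\mu}\,(D_\mu(q\|p)+1)^{1/\mu}$ with $\Mcal_\mu=\Esp_{\omegabf\sim p}\Esp_{S'}|\cdot|^{\mu}$, i.e.\ the moment order equal to the divergence order. This swaps which regime is easy: with $\Mcal_\mu$, the case $1<\mu\leq2$ follows from Jensen, $\Mcal_\mu=\Esp\,\Esp[(\cdot)^2]^{\mu/2}\leq(\tfrac1{4n})^{\mu/2}$, which gives exactly $(\tfrac1{2\sqrt n})^{\mu-1}$ after the outer power $1-\tfrac1\mu$, while the case $\mu>2$ follows from boundedness, $|\cdot|^\mu\leq(\cdot)^2$, hence $\Mcal_\mu\leq\tfrac1{4n}$.

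With your pairing the first case still closes: since $\mu'\geq2$ and the gap lies in $[-1,1]$, boundedness gives $\Mcal_{\mu'}\leq\tfrac1{4n}$, and $(\tfrac1{4n})^{1-1/\mu}\leq(\tfrac1{2\sqrt n})^{\mu-1}$ precisely when $\mu\leq2$, so you in fact obtain a slightly stronger bound than stated---though not the factor you announce. The genuine gap is the case $\mu>2$, exactly where you locate the difficulty. There $\mu'<2$, and no combination of the boundedness $|\Lcal_\Dgen(h_\omegabf)-\widehat\Lcal_{S'}(h_\omegabf)|\leq1$ with the variance bound $\tfrac1{4n}$ can force the $\mu'$-th moment down to $\tfrac1{4n}$: a centered variable taking the two values $\pm\tfrac1{2\sqrt n}$ satisfies both constraints yet has $\mu'$-th absolute moment $(\tfrac1{4n})^{\mu'/2}\gg\tfrac1{4n}$. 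Jensen in this regime only yields $(\Mcal_{\mu'})^{1/\mu'}\leq\tfrac1{2\sqrt n}$, strictly weaker than the claimed $(\tfrac1{4n})^{1-1/\mu}$. Deferring to ``the moment inequality underlying \citet{alquier-2018}'' is circular, since that inequality is precisely the step you are instantiating. To recover the stated second case you must work with the $\mu$-th moment as the paper does (or settle for the weaker rate $\tfrac1{2\sqrt n}$ in that branch).
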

\begin{proof} Let 
	$\displaystyle    \Mcal_\etamu \eqdots \Esp_{\omegabf\sim p} \Esp_{S'\sim D^n} \left|  \Lcal_{\Dgen}(h_\omegabf) - \widehat\Lcal_{S'}(h_\omegabf) \right|^\etamu$.
	
	We start from \citet[Theorem~1]{alquier-2018}:
	\begin{equation} \label{eq:alquier}
	\Lcal_{\Dgen}(k_q)  \leq \widehat\Lcal_{S}(k_q) + 
	\left(\frac{\Mcal_\etamu}{\varepsilon}\right)^{1-\frac1\etamu} \Big(D_\etamu(q\|p)+1\Big)^\frac1\etamu.
	\end{equation}
	Let us show
	$\Mcal_\etamu\leq\left(\frac1{2\sqrt{n}} \right)^{\etamu} $ for $1<\etamu\leq 2$\,:
	\begin{align}
	\Mcal_\etamu \nonumber
	&= \Esp_{\omegabf\sim p} \Esp_{S'\sim D^n}\left[ \left( \Lcal_{\Dgen}(h_\omegabf) - \widehat\Lcal_{S'}(h_\omegabf) \right)^2 \right]^{\frac\etamu2} \\ 
	&\leq \Esp_{\omegabf\sim p}\left[ \Esp_{S'\sim D^n} \left( \Lcal_{\Dgen}(h_\omegabf) - \widehat\Lcal_{S'}(h_\omegabf) \right)^2 \right]^{\frac\etamu2} \label{eq:jensen007}
	\\ \nonumber
	&= \Esp_{\omegabf\sim p}\left[ \Var_{S'\sim D^n} \left( \Lcal_{S'}(h_\omegabf)  \right) \right]^{\frac\etamu2} 
	\\
	&\leq \Esp_{\omegabf\sim p}\left[ \frac1{4n} \right]^{\frac\etamu2} 
	= \left[ \frac1{4n} \right]^{\frac\etamu2}. \label{eq:boucheron007}
	\end{align}
	Line~\eqref{eq:jensen007} is obtained by Jensen's inequality (since \mbox{$0<\frac\etamu2\leq1$}), and the inequality of Line~\eqref{eq:boucheron007} is proven by Lemma~\ref{lem:1sur4n} of the supplementary material. Note that the latter is based on the Efron-Stein inequality and \citet[Corollary 3.2]{boucheron-13}.
	
	The first case of Theorem \ref{thm:ustats2} statement ($1<\etamu\leq 2$) is obtained by inserting Line~\eqref{eq:boucheron007}  in Equation~\eqref{eq:alquier}. The second case ($\etamu>2$) is obtained by upper-bounding $\Mcal_\etamu$ by $\Mcal_2 = \frac{1}{4n}$,
	as $ |\Lcal_{\Dgen}(h_\omegabf) - \widehat\Lcal_{S'}(h_\omegabf)|\leq 1$\,.
\end{proof}
As a particular case, with $\etamu=2$, we obtain from Theorem~\ref{thm:ustats2} a bound that relies on the chi-square divergence 	
	$\chi^2(q\|p) = \Esp_{\omegabf\sim p} \big( \frac{q(\omegabf)}{p(\omegabf)}\big)^2\! - 1\,$. 
\begin{cor} \label{cor:ustats+chi}
	Given a prior distribution $p$ over $\Rbb^d$, with probability $1{-}\varepsilon$ over the choice of~$S\sim \Dgen^{n}$, we have for all $q$ on $\Rbb^d$\,:
	\begin{equation*}
	\Lcal_{\Dgen}(k_q)  \leq \widehat\Lcal_{S}(k_q) + 
	\sqrt{\frac{\chi^2(q\|p)+1}{4\, n\, \varepsilon}}\,.
	\end{equation*}
\end{cor}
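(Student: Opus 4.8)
The plan is to obtain Corollary~\ref{cor:ustats+chi} directly as the special case $\etamu=2$ of Theorem~\ref{thm:ustats2}. Since $\etamu=2$ satisfies $1<\etamu\leq 2$, the first branch of that theorem applies, so there is no need to touch the $\etamu>2$ expression at all; the whole argument reduces to substituting $\etamu=2$ into the first-case bound term and simplifying.

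First I would evaluate the three factors of that term at $\etamu=2$. The prefactor becomes $\left(\frac{1}{2\sqrt n}\right)^{\etamu-1}=\frac{1}{2\sqrt n}$, the confidence factor becomes $\left(\frac{1}{\varepsilon}\right)^{1-\frac1\etamu}=\left(\frac{1}{\varepsilon}\right)^{1/2}$, and the divergence factor becomes $\bigl(D_\etamu(q\|p)+1\bigr)^{1/\etamu}=\bigl(D_2(q\|p)+1\bigr)^{1/2}$. Next I would identify $D_2(q\|p)$ with the chi-square divergence: by the definition recalled in Theorem~\ref{thm:ustats2}, $D_\etamu(q\|p)=\Esp_{\omegabf\sim p}\left(\frac{q(\omegabf)}{p(\omegabf)}\right)^{\etamu}-1$, so setting $\etamu=2$ gives $D_2(q\|p)=\Esp_{\omegabf\sim p}\left(\frac{q(\omegabf)}{p(\omegabf)}\right)^{2}-1=\chi^2(q\|p)$, which is exactly the quantity named just before the corollary statement.

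Finally I would multiply the three simplified factors and collect them under one square root, obtaining $\frac{1}{2\sqrt n}\,\sqrt{\chi^2(q\|p)+1}\,\left(\frac{1}{\varepsilon}\right)^{1/2}=\sqrt{\frac{\chi^2(q\|p)+1}{4\,n\,\varepsilon}}$, which is the asserted bound. There is no real obstacle here, as the corollary is a pure specialization; the only point worth checking is consistency at the branch endpoint, namely that the two cases of Theorem~\ref{thm:ustats2} agree at $\etamu=2$. This is immediate, since the $\etamu>2$ prefactor $\left(\frac{1}{4n}\right)^{1-\frac1\etamu}$ also evaluates to $\frac{1}{2\sqrt n}$ at $\etamu=2$, so the bound is continuous across the two regimes and the choice of branch is harmless.
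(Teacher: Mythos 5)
Your proposal is correct and matches the paper's own derivation: Corollary~\ref{cor:ustats+chi} is obtained exactly as the $\etamu=2$ instance of the first branch of Theorem~\ref{thm:ustats2}, with $D_2(q\|p)=\chi^2(q\|p)$ and the three factors collapsing to $\sqrt{(\chi^2(q\|p)+1)/(4n\varepsilon)}$. The extra check that the two branches agree at $\etamu=2$ is a nice sanity verification but not required.
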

It is noteworthy that the above result looks alike other PAC-Bayesian bounds based on the \mbox{chi-square} divergence in the \iid setting, as the one of \citet[Lemma~7]{honorio-14}, \citet[Corollary~10]{graal-aistats16} or \citet[Corollary~1]{alquier-2018}.
Interestingly, the latter has been introduced to handle unbounded (possibly heavy-tailed) losses, and one could also extend our Corollary~\ref{cor:ustats+chi} to this setting.

\subsection{PAC-Bayesian Interpretation of Kernel Alignment Optimization}
\label{section:alignement}

\citet{SinhaD16} propose a kernel learning algorithm that weights random kernel features. To do so, their algorithm solves a \emph{kernel alignment} problem. As explained below, this method is coherent with the PAC-Bayesian theory exposed by our current work. 

\paragraph{Kernel alignment algorithm.}
Let us consider a Fourier transform distribution $p$, from which $N$ points are sampled, denoted $\Omega \!=\! \{\omegabf_m\}_{m=1}^N \sim p^N$. Then, consider a uniform distribution $P$ on the discrete hypothesis set $\Omega$, such that $P(\omegabf_m) = \frac1N$ and $h_m(\deltabf) \eqdef \cos(\omegabf_m\cdot\deltabf)$. 
Given a dataset $S$, and constant parameters $\etamu>1$, $\rho>0$, the optimization algorithm proposed by~\citeauthor{SinhaD16} solves the following problem.
\begin{align} \label{eq:sinhaduchi1}
&\maximize_{Q\in \Rbb^N_+} \sum_{i=1}^n \sum_{j=1}^n  
\lambda_{ij} \sum_{m=1}^N Q_m h_m(\deltabf_{ij})\,,\\
\label{eq:sinhaduchi2}
&\quad\mbox{such that } \sum_{m=1}^N Q_m {=} 1 \mbox{ and } D_\etamu(Q\|P) \leq \rho\,.
\end{align}
The iterative procedure proposed by~\citeauthor{SinhaD16} finds an \mbox{$\epsilon$-suboptimal} solution to the above problem in $O(N \log(\frac1\epsilon))$ steps.
The solution provides a learned kernel \mbox{$\widehat{k}_Q(\deltabf)\eqdef \frac1N\sum_{m=1}^N Q_m h_m(\deltabf)$}.

\citeauthor{SinhaD16} propose to use the above alignment method to reduce the number of features needed compared to the classical RFF procedure (as described in Section~\ref{sec:rffsetting}). Albeit this method is a kernel learning one, empirical experiments show that with a large number of random features, the classical RFF procedure achieves as good prediction accuracy. However, one can draw (with replacement) $D<N$ features from $\Omega$ according to $Q$. For a relatively small $D$, learning a linear predictor on the random feature vector (such as the one presented by Equation~\ref{eq:phibf_mapping}) obtained from $Q$ achieves better results than the classical RFF method on the same number $D$ of random features.

\paragraph{PAC-Bayesian interpretation.} The optimization problem of Equations~(\ref{eq:sinhaduchi1}--\ref{eq:sinhaduchi2}) deals with the same trade-off as the one promoted by Theorem~\ref{thm:ustats2}. Indeed, maximizing Equation~\eqref{eq:sinhaduchi1} amounts to minimizing $\widehat\Lcal_{S}(k_q)$, and the constraint of Equation~\eqref{eq:sinhaduchi2} controls the $f$-divergence $D_\etamu(Q\|P)$, which is the same complexity measure involved in Theorem~\ref{thm:ustats2}. Furthermore, the empirical experiments performed by~\citet{SinhaD16} focus on the \mbox{$\chi^2$-divergence} (case $\etamu{=}2$), which corresponds to tackling the trade-off expressed by Corollary~\ref{cor:ustats+chi}.

\subsection{Greedy Kernel Learning}
\label{sect:greedy_kernel_learning}

The method proposed by \citet{SinhaD16} can easily be adapted to minimize the bound of Theorem~\ref{thm:ustats} instead of the bound of Theorem~\ref{thm:ustats2}. 
We describe this kernel learning procedure below.

Given a Fourier transform prior distribution $p$, let sample $N$ points $\Omega = \{\omegabf_m\}_{m=1}^N \sim p^N$. Let $P(\omegabf_m) = \frac1N$ and $h_m(\deltabf) \eqdef \cos(\omegabf_m\cdot\deltabf)$. 
Given a dataset $S$, and constant parameters $\beta>0$, compute the following pseudo-posterior for $m=1,\ldots, N\,$:
\begin{equation} \label{eq:discrete_pseudoQ_final}
Q_m\, = \, \frac1{Z}\,\exp\Big(-\beta\sqrt{n} \,
\widehat\Lcal_{S}(h_m)
\Big)\,.
\end{equation}
%
Then, we sample with replacement $D<N$ features from $\Omega$ according to the pseudo-posterior $Q$. The sampled features are used to map every $\xbf\in\Rbb^d$ of the training set into a new vector $\phibf(\xbf)\in\Rbb^{2D}$ according to Equation~\eqref{eq:phibf_mapping}. The latter transformed dataset is then given as input to a linear learning procedure.

In summary, this learning method is strongly inspired by the one described in Section~\ref{section:alignement}, but the posterior computation phase is faster, as we benefit from a closed-form expression (Equation~\ref{eq:discrete_pseudoQ_final}).
Once $\widehat\Lcal_{S}(h_m)$ is computed for all $h_m$,\footnote{We show in supplementary material (section~\ref{section:kalc}) that each $\widehat\Lcal_{S}(h_m)$ can be computed in $O(n)$ steps.}
we can vary the parameter $\beta$ and get a new posterior in $O(N)$ steps.


\section{EXPERIMENTS}\label{sec:experiments}

All experiments use a Gaussian (\aka\ RBF) kernel of variance~$\sigma^2$: 
$k_\sigma (\xbf,\xbf')=\exp\big(-\tfrac1{2\sigma^2}\|\xbf-\xbf'\|^2\big)\,,$
for which the Fourier transform is given by
\begin{equation}\label{eq:rbf_prior}
p_\sigma (\omegabf) \ = \ \left(\tfrac{\sigma^2}{2 \pi}\right)^{\frac d 2} e^{-\frac12 \sigma^2 \|\omegabf\|^2} \ = \ \Ncal(\omegabf ; \zerobf, \tfrac1{\sigma^2} \Ibf)\,.
\end{equation}
Apart from the toy experiment of Figure~\ref{fig:pgtoy}, the experiments on real data are conducted by splitting the available data into a training set, a validation set and a test set. The kernel parameter $\sigma$ is chosen among $\{10^{-7}, 10^{-6}, \dots, 10^{2}\}$ by running an RBF SVM on the training set and keeping the parameter having the best accuracy score on the validation set. That is, this $\sigma$ defines the prior distribution given by Equation~\eqref{eq:rbf_prior} for all our pseudo-Bayesian methods. Unless otherwise specified, all the other parameters are selected using the validation set. More details about the experimental procedure are given in the supplementary material.

\subsection{Landmarks-Based Learning}
\label{section:expe_landmarks}


\begin{figure*}[h]\centering
	\makebox[\textwidth]{\includegraphics[width=.8\textwidth]{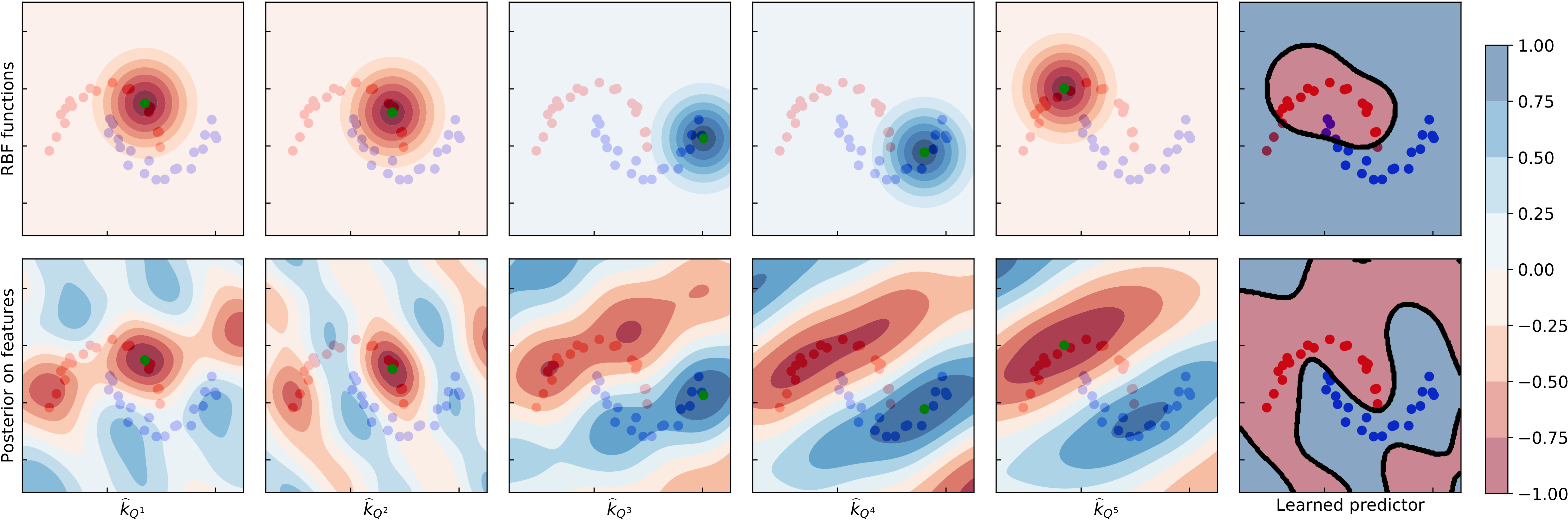}} \\   
	\caption{First row shows selected RBF-Landmarks kernel outputs, while second row shows the corresponding learned similarity measures on random Fourier features (PB-Landmarks). The rightmost column displays the classification learned by a linear SVM over the mapped dataset.}  \label{fig:pgtoy} 
\end{figure*}

\begin{figure}[h]
    \centering
    \includegraphics[trim={0cm 0cm 0cm 0cm},clip,width=\linewidth]{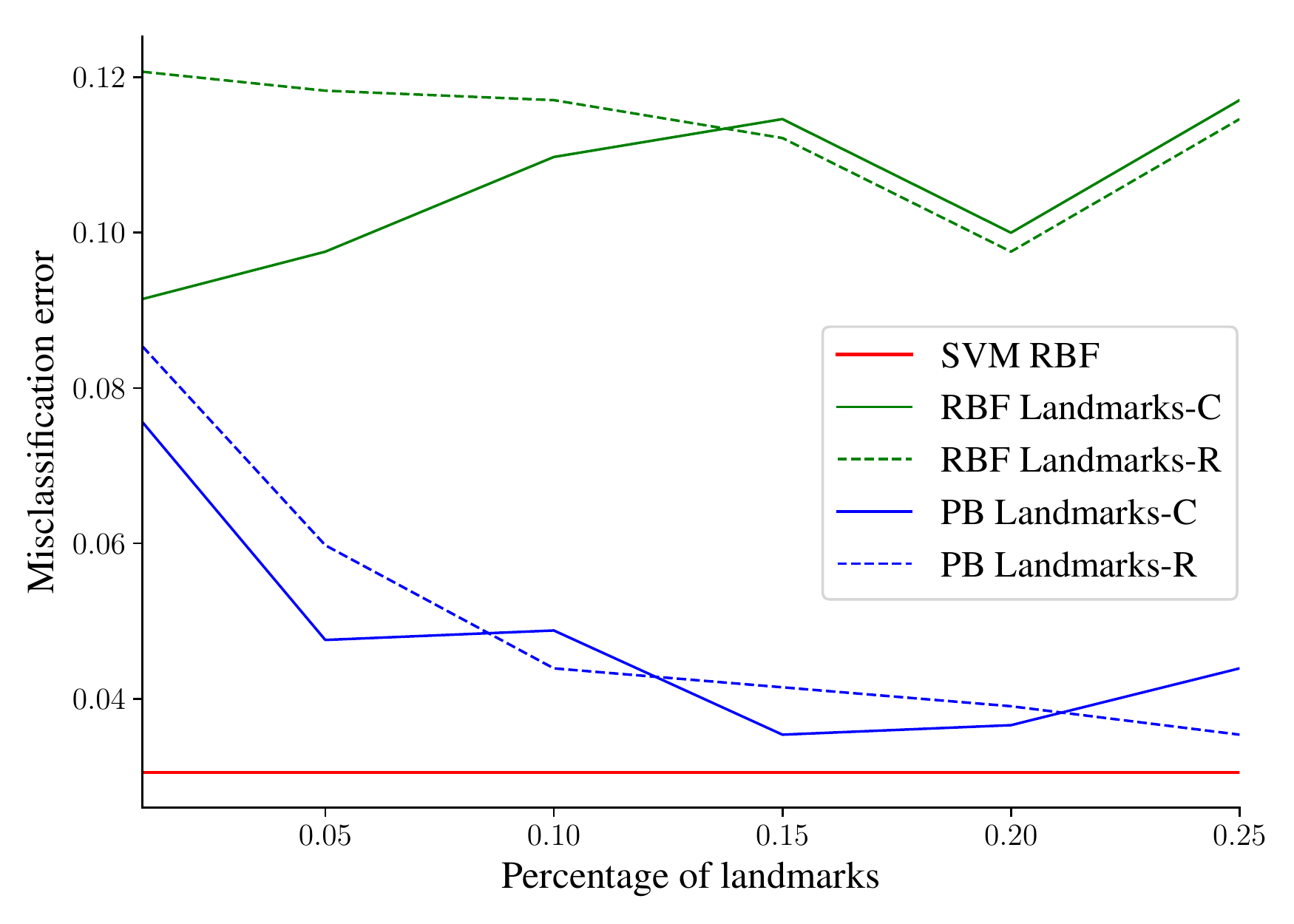}
    \caption{Behavior of the landmarks-based approach according to the percentage of training points selected as landmarks on the dataset ``ads''. }
    \label{fig:error_landmarks_ads}
\end{figure}

\paragraph{Toy experiment.} To get some insight from the landmarks-based procedure
of Section~\ref{sec:landmarks-based},
we generate a 2D dataset~$S_{\rm toy}$, illustrated by Figure~\ref{fig:pgtoy}.  
We randomly select five training points $L{=}\{\xbf_1,\xbf_2,\xbf_3,\xbf_4,\xbf_5\}\subset S_{\rm toy}$, and compare two procedures, described below.

\underline{RBF-Landmarks}: Learn a linear SVM on the \emph{empirical kernel map} given by the five RBF kernels centered on~$L$. That is, each $\xbf\in S_{\rm toy}$ is mapped such that\\[1mm]
	{\footnotesize $ 
	\xbf \mapsto \Big(k_\sigma(\xbf_1,\xbf), k_\sigma(\xbf_2,\xbf), k_\sigma(\xbf_3,\xbf), k_\sigma(\xbf_4,\xbf), k_\sigma(\xbf_5,\xbf)\Big)\,.
	$}
	
\underline{PB-Landmarks}: Generate $20$ random samples according to the Fourier transform of Equation~\eqref{eq:rbf_prior}. For every landmark of $L$, learn a \emph{similarity measure} thanks to Equation~\eqref{eq:discrete_pseudoQ} (with $\beta=1$), 
minimizing the PAC-Bayesian bound.
We thus obtain five posterior distributions $Q^1,Q^2,Q^3,Q^4,Q^5$, and learn a linear SVM on the mapped training set obtained by Equation~\eqref{eq:psimap}.

\begin{table}[h]
  \caption{Test error of the landmarks-based approach.}
  \label{tab:landmarks_results}
  \centering
  \setlength{\tabcolsep}{4pt}
  \begin{tabular}{lrrrrr}
    \toprule
    \multirow{2}{*}[-3pt]{Dataset} &    \multirow{2}{*}[-11pt]{} &  \multicolumn{4}{c}{landmarks-based}\\
    \cmidrule(lr){3-6}
    & SVM & RBF &    PB &  PB$_{\beta=1}$ &  PB$_{D=64}$ \\
    \midrule
    ads     &   3.05 &  10.98 &   \textbf{4.88} &     5.12 &   5.00 \\
    adult   &  19.70 &  19.60 &  \textbf{17.99} &    \textbf{17.99} &  \textbf{17.99} \\
    breast  &   4.90 &   6.99 &   3.50 &     3.50 &   \textbf{2.80} \\
    farm    &  11.58 &  17.47 &  15.73 &    \textbf{14.19} &  15.73 \\
    mnist17 &   0.34 &   0.74 &   0.42 &     \textbf{0.32} &   \textbf{0.32} \\
    mnist49 &   1.16 &   2.26 &   \textbf{1.80} &     2.09 &   2.50 \\
    mnist56 &   0.55 &   \textbf{0.97} &   1.06 &     1.55 &   1.03 \\
    \bottomrule
\end{tabular}
\end{table}

Hence, the RBF-Landmarks method corresponds to the prior, from which we learn a posterior by landmarks by the PB-Landmarks procedure. Right-most plots of Figure~\ref{fig:pgtoy} show that the PB-Landmarks setting successfully finds a representation from which the linear SVM can predict well.

\begin{figure*}[t]
    \centering
    \begin{subfigure}[t]{0.325\textwidth}
        \centering
        \includegraphics[trim={0cm 0cm 0cm 0cm},clip,width=\linewidth]{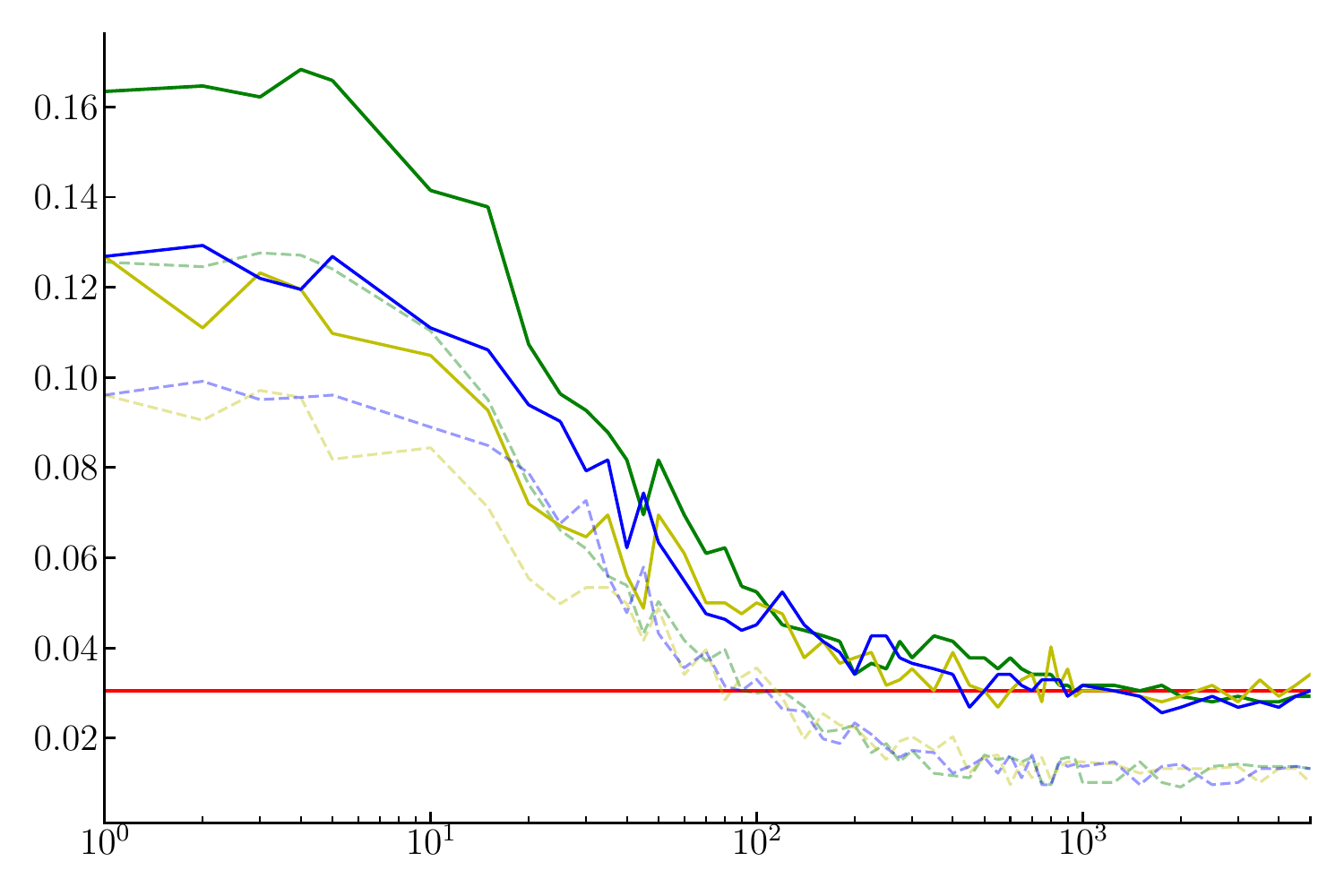}\vspace{-2mm}
        \caption{ads}
        \label{fig:err_vs_d_ads}
    \end{subfigure}
    \hfill
    \begin{subfigure}[t]{0.325\textwidth}
        \centering
        \includegraphics[trim={0cm 0cm 0cm 0cm},clip,width=\linewidth]{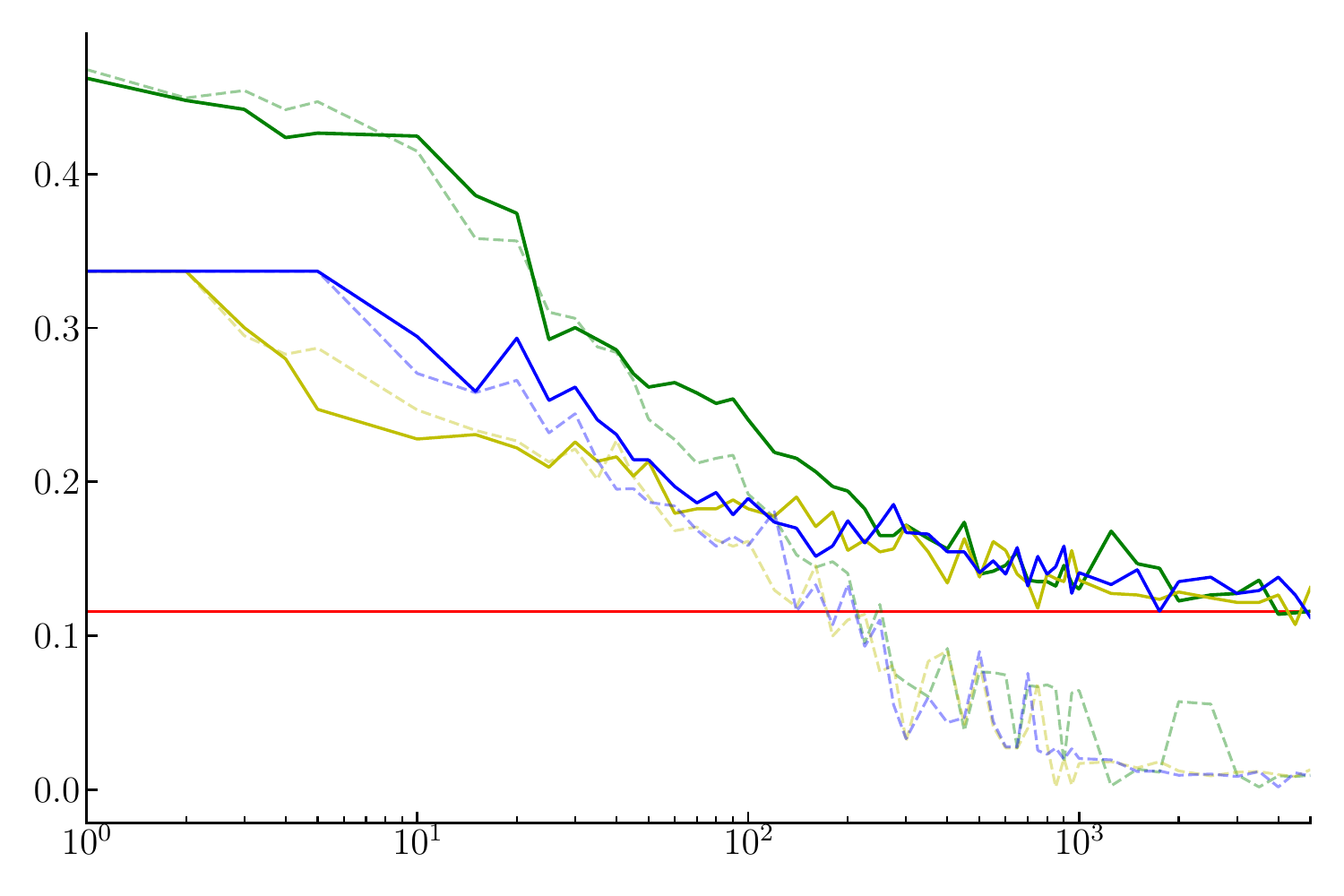}\vspace{-2mm}
        \caption{farm}
         \label{fig:err_vs_d_farm}
    \end{subfigure}
    \hfill
    \begin{subfigure}[t]{0.325\textwidth}
        \centering
        \includegraphics[trim={0cm 0cm 0cm 0cm},clip,width=\linewidth]{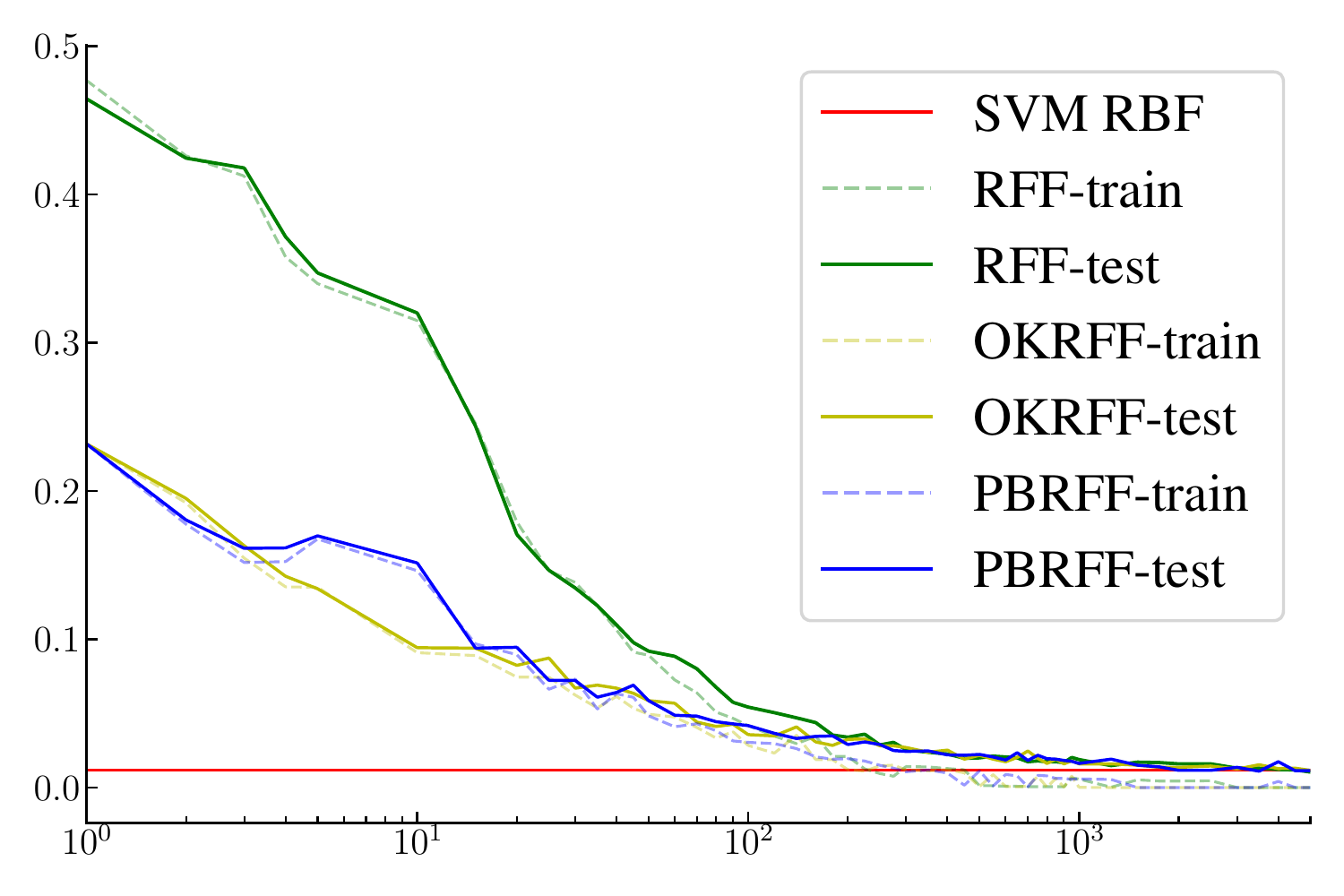}\vspace{-2mm}
        \caption{mnist49}
         \label{fig:err_vs_d_mnist49}
    \end{subfigure}
    \caption{Train and test error of the kernel learning approaches according to the number of random features $D$.}
    \label{fig:err_vs_d}
\end{figure*}

\paragraph{Experiments on real data.}

We conduct similar experiments as the above one on seven real binary classification datasets.
Figure~\ref{fig:error_landmarks_ads} studies the behavior of the approaches according to the number of selected landmarks. 
We select a percentage of the training points as landmarks (from $1\%$ to $25\%$), and we compare the classification error of a linear SVM on the mapping obtained by the original RBF functions (as in the RBF-Landmarks method above), with the mapping obtained by learning the landmarks posterior distributions (PB-Landmark method). 
We also compare the case where the landmarks are selected at random among the training data (curves postfixed~\mbox{``-R''}), to another scenario where we use the centroids obtained with a  $k$-Means clustering as landmarks (curves postfixed ``-C''). 
Note that, the latter case is not rigorously backed by our PAC-Bayesian theorems, since the choice of landmarks is now dependent of the whole observed training set. 
The results show that the classification error of both cases are similar, but the clustering strategy leads to a more stable behavior, probably since the landmarks are more representative of the original space. Moreover, the pseudo-Bayesian method improves the results on almost all datasets.

Table~\ref{tab:landmarks_results} compares the error rate of an SVM (trained along with the full Gram matrix and a properly selected $\sigma$ on the validation set) with four landmarks-based approaches: 
{\it (RBF)} the landmarks are RBF kernel of parameter~$\sigma$; 
{\it (PB)} the PB-Landmarks approach where the number of features per landmarks $D$ and the $\beta$ parameter are selected using the validation set; 
{\it (PB$_{\beta=1}$)} the PB-Landmarks approach where $\beta{=}1$ is fixed and $D$ is selected by validation; and
{\it (PB$_{D=64}$)} the PB-Landmarks approach where $D{=}64$ is fixed and $\beta$ is selected by validation. 
For all landmarks-based approaches, we select the landmarks by clustering, and use $10\%$ of the training set size as the number of landmarks; we want to study the methods in the regime where it provides relatively compact representations. 
We observe 
that learning the posterior improves the RBF-Landmarks (except on ``mnist56'') and that the validation of both $\beta$ and $D$ parameters are not mandatory to obtain satisfactory results. The SVM RBF is better than all landmarks-based approaches on 4 datasets out of 7,
but requires a far less compact representation of the data as it uses the full Gram matrix.

\subsection{Greedy Kernel Learning}
\label{section:expe_greedy}

Figure~\ref{fig:err_vs_d} presents a study of the kernel learning method detailed in Section~\ref{sect:greedy_kernel_learning}, inspired from the one of \citet{SinhaD16}.
We first generate $N{=}20000$ random features according to $p_\sigma$ as given by Equation~\eqref{eq:zcossin}, and we learn a posterior using two strategies: {\it (OKRFF)} the original optimized kernel of~\citeauthor{SinhaD16} given by Equations~(\ref{eq:sinhaduchi1}-\ref{eq:sinhaduchi2}), where $\rho$ is selected 
on the validation set; and {\it (PBRFF)} the pseudo-posterior given by Equation~\eqref{eq:discrete_pseudoQ_final} where $\beta$ is selected on the validation set. For both obtained posteriors, we subsample an increasing number of features $D\in[1,5000]$ to create the mapping given by Equation~\eqref{eq:phibf_mapping}, on which we learn a linear SVM. We also compare to {\it (RFF)} the standard random Fourier features as described in Section~\ref{sec:rffsetting}, with $D$ randomly selected features according to the prior $p_\sigma$.

We see that our PBRFF approach behaves similarly as OKRFF, with a slight advantage for the latter. However, we recall that computing the posterior of former method is faster. 
Both kernel learning methods have better accuracy than the classical RFF algorithm for a small number of random features, and similar ones for a large number of random features.

\section{\!CONCLUSION \& PERSPECTIVES}
\label{sec:conclu}
We elaborated an original viewpoint of the random Fourier features, proposed by~\cite{rahimi-07} to approximate a kernel.
By looking at the Fourier transform as a prior distribution over trigonometric functions, we present two kinds of generalization theorems that bound a kernel alignment loss. Based on classical first-order PAC-Bayesian results, we derived a landmarks-based  strategy that learns a compact representation of the data. Then, we proposed two second-order generalization bounds. The first one is based on the \mbox{U-statistic} theorem of~\citet{lever-13}. The second one is a new PAC-Bayesian theorem for \mbox{$f$-divergences} (replacing the usual $\KL$-divergence term).  We show that the latter bound provides a theoretical justification to the kernel alignment method of~\citet{SinhaD16}, and we also empirically evaluate a similar but simpler algorithm where the alignment distribution is obtained by the PAC-Bayesian pseudo-posterior closed-form expression. 

Our current guarantees hold solely for the kernel alignment loss, and not for the predictor trained with this kernel. An important research direction is to extend the guarantees to the final predictor, which could in turn be the bedrock of a new one-step learning procedure \citep[in the vein of][]{yang2015carte,oliva2016bayesian}.
Other research directions include the study of the RKHS associated with the learned kernel,
and the extension of our study to wavelet transforms \citep{Mallat-book}.
Furthermore, considering the Fourier transform of a kernel as a \mbox{(pseudo-)}Bayesian prior might lead to other original contributions.
Among them, we foresee new perspectives on representation and metric learning, namely for unsupervised learning.

\paragraph{Acknowledgments.}
P. Germain wants to thank Francis Bach for insightful preliminary discussions.
This work was supported in part by the French Project APRIORI ANR-18-CE23-0015 and in part by NSERC. 
This research was enabled in part by support provided by  Compute Canada {\footnotesize (\url{www.computecanada.ca})}.

\bibliographystyle{plainnat}
\bibliography{biblio}

\newcounter{lemsupIndex}
\addtocounter{lemsupIndex}{\thethm}

\newtheorem{innercustomthm}{Lemma}
\newenvironment{lemsup}
  { \addtocounter{lemsupIndex}{1}
  \renewcommand\theinnercustomthm{S\thelemsupIndex}\innercustomthm}
  {\endinnercustomthm}

\newtheorem{innercustomthmB}{Theorem}
\newenvironment{thmsup}
  { \addtocounter{lemsupIndex}{1}
  \renewcommand\theinnercustomthmB{S\thelemsupIndex}\innercustomthmB}
  {\endinnercustomthmB}

\onecolumn
\thispagestyle{empty}

\appendix
\section{Supplementary Material}

\subsection{Mathematical Results}


\paragraph{Corollary~\ref{cor:pb1}.}
\textit{For $t>0$ and a prior distribution $p$ over $\Rbb^d$, with probability $1{-}\varepsilon$ over the choice of~$S\sim \Dgen^{n}$, we have for all $q$ on $\Rbb^d$\,:
	\begin{equation*}
	\Lcal_{\Dgen}(k_q)  \leq \widehat\Lcal_{S}(k_q) + \frac{2}{t}\left( \KL(q\|p) + \frac{t^2}{2(n-1)} + \ln\frac{n+1}\varepsilon\right).
	\end{equation*}
}
\begin{proof}
We want to bound
\begin{align*}
\Lcal_\Dcal(k_q) 
&= \Esp_{(\xbf, y)\sim \Dcal}\Esp_{(\xbf', y')\sim \Dcal}\Esp_{\omegabf \sim q} \ell\Big( h_\omegabf(\xbf-\xbf'), \lambda(y,y')\Big) \\
&= \Esp_{(\xbf', y')\sim \Dcal} \Lcal'_\Dcal (k_q)\,,
\end{align*}
where $\Lcal'_\Dcal (k_q)$ is the alignment loss of the kernel $k_q$ centered on $(\xbf', y') \sim \Dcal$ (see Equation~\eqref{eq:Lcali}).

Let $t>0$ and $p$ a distribution on $\Rbb^d$. By applying the PAC-Bayesian theorem, with $\varepsilon_0\in(0,1)$, we have \begin{align*}
\Pr_{S\sim\Dcal^n} \left( \forall q \mbox{ on } \Rbb^d :
\Lcal_\Dcal(k_q) 
\leq \frac{1}{n}\sum_{i=1}^{n}\Lcal^i_\Dcal(k_q) + \frac{1}{t}\Bigg[\KL(q\|p) + \frac{t^2}{2n} + \ln\frac{1}{\varepsilon_0}\Bigg] \right) \geq 1-\epsilon_0\,.
\end{align*}
Moreover, we have that for each $i \in \{1,\ldots,n\}$, with a $\varepsilon_i\in(0,1)$, we have
\begin{align*}
\Pr_{S\sim\Dcal^n} \left( \forall q \mbox{ on } \Rbb^d :
\Lcal^i_\Dcal(k_q)
\leq
\widehat\Lcal^i_S(k_q) + \frac{1}{t}\Bigg[\KL(q\|p) + \frac{t^2}{2(n-1)} + \ln\frac{1}{\varepsilon_i}\Bigg] \right) \geq 1-\epsilon_i\,.
\end{align*}
By combining above probabilistic results with $\varepsilon_0=\varepsilon_1=\cdots=\varepsilon_n = \frac{\varepsilon}{n+1}$, we obtain that, with probability at least $1-\varepsilon$\,,
\begin{align*}
\Lcal_\Dcal(k_q) &=  \Esp_{(\xbf', y')\sim \Dcal} \Lcal'_\Dcal (k_q)\\
&\leq 
\frac{1}{n}\sum_{i=1}^{n} \Bigg[ 
\widehat\Lcal^i_S(k_q) + \frac{1}{t}\bigg[\KL(q\|p) + \frac{t^2}{2(n-1)} + \ln\frac{n+1}{\varepsilon}\bigg] \Bigg]
+ \frac{1}{t}\bigg[\KL(q\|p) + \frac{t^2}{2n} + \ln\frac{n+1}{\varepsilon}\bigg] \\
&=
\widehat\Lcal_S(k_q) + \frac{1}{t}\bigg[\KL(q\|p) + \frac{t^2}{2(n-1)} + \ln\frac{n+1}{\varepsilon}\bigg] 
+ \frac{1}{t}\bigg[\KL(q\|p) + \frac{t^2}{2n} + \ln\frac{n+1}{\varepsilon}\bigg] \\
&\leq
\widehat\Lcal_S(k_q)
+ \frac{2}{t}\bigg[\KL(q\|p) + \frac{t^2}{2(n-1)} + \ln\frac{n+1}{\varepsilon}\bigg]\,.
\end{align*}
\end{proof}

\begin{lem} \label{lem:1sur4n} For any data-generating distribution $\Dcal$:
$$\Var_{S'\sim \Dcal^n} \left( \Lcal_{S'}(h_\omegabf)  \right) \leq \frac1{4n} \,.$$
\end{lem}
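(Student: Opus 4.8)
The plan is to recognize that $\widehat\Lcal_{S'}(h_\omegabf)$ is a second-order U-statistic and to control its variance through the Efron-Stein inequality. Writing $g_{ij}\eqdef\ell\big(h_\omegabf(\xbf_i-\xbf_j),\lambda(y_i,y_j)\big)\in[0,1]$, we have $\widehat\Lcal_{S'}(h_\omegabf)=\binom{n}{2}^{-1}\sum_{i<j}g_{ij}$, a \emph{symmetric} U-statistic of order $2$ (symmetric because $\cos$ is even and $\lambda$ is symmetric) whose kernel is bounded in $[0,1]$. This is exactly the regime where the bounded-differences machinery of \citet[Theorem~3.1 and Corollary~3.2]{boucheron-13} applies.

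First I would invoke Efron-Stein in its conditional-variance form, $\Var_{S'}(\widehat\Lcal_{S'})\le\sum_{k=1}^n\Esp\,\Var_k(\widehat\Lcal_{S'})$, where $\Var_k$ is taken over the $k$-th example alone. Replacing the $k$-th example leaves untouched every summand except the $n-1$ pairs containing index $k$, so the only $k$-dependent part of $\widehat\Lcal_{S'}$ is $\binom{n}{2}^{-1}\sum_{j\neq k}g_{kj}$, which ranges over an interval of length $\binom{n}{2}^{-1}(n-1)=\tfrac2n$. By Popoviciu's inequality (the variance of a variable confined to an interval of length $L$ is at most $L^2/4$), each conditional variance is at most $\tfrac14(\tfrac2n)^2=\tfrac1{n^2}$, and summing the $n$ terms gives $\Var_{S'}(\widehat\Lcal_{S'})\le\tfrac1n$. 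This already yields the correct $O(1/n)$ rate, so the real content of the lemma is the sharper constant $\tfrac14$.

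To recover that constant I would pass to the exact Hoeffding decomposition, $\Var_{S'}(\widehat\Lcal_{S'})=\tfrac4n\,\sigma_1^2+\tfrac{2}{n(n-1)}\,\sigma_2^{(2)}$, whose leading term is governed by the first-order projection variance $\sigma_1^2=\Var\big(\Esp_{(\xbf',y')}g\big)$. Using the angle-addition formula, the conditional mean can be written as $\tfrac12\big(1-r(y)\cos(\omegabf\cdot\xbf-\phi(y))\big)$ with amplitude $r(y)=\big|\Esp_{(\xbf',y')}[\lambda\,e^{i\omegabf\cdot\xbf'}]\big|\le1$, exhibiting it as a bounded cosine; the mean/variance trade-off for bounded variables should then force $\sigma_1^2\le\tfrac1{16}$, giving a leading term $\le\tfrac1{4n}$. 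The hard part is precisely this constant chase: making the amplitude bound fully rigorous once the label similarity $\lambda$ couples $(\xbf,y)$ with $(\xbf',y')$, and then absorbing the residual interaction term $\tfrac{2}{n(n-1)}\sigma_2^{(2)}$ (controlled via $\sigma_2^{(2)}\le\tfrac14$) so that it does not degrade the clean $\tfrac14$. If that $O(n^{-2})$ term resists being absorbed, I would fall back on the Efron-Stein estimate above and accept the looser constant.
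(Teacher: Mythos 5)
Your proposal follows exactly the paper's route---Efron--Stein through the bounded-differences corollary of \citet[Corollary~3.2]{boucheron-13}---but your accounting of the perturbation is the honest one, and it exposes a real problem with the target constant. Replacing the $k$-th example perturbs the $2(n-1)$ ordered pairs involving $k$, and each loss term can move by a full unit (take all points and all labels identical, then flip $y_k$: every $\ell_{kj}$ jumps from $0$ to $1$), so the bounded-difference constant is $\tfrac2n$, not the $\tfrac1n$ asserted without justification in the paper's proof. Your conclusion $\Var \le \tfrac1n$ is therefore what this method actually delivers, and your suspicion that the factor $\tfrac14$ cannot be recovered is correct: the lemma as stated is false. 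Take $y\equiv 1$ and $\omegabf\cdot\xbf\in\{0,\pi\}$ with $P(\omegabf\cdot\xbf=0)=p$; then $\ell\big(h_\omegabf(\xbf_i-\xbf_j),\lambda_{ij}\big)=\tfrac{1-\epsilon_i\epsilon_j}{2}$ with $\epsilon_i\in\{\pm1\}$ i.i.d.\ and $\mu=\Esp\,\epsilon_i=2p-1$, and the exact U-statistic variance formula gives
\begin{equation*}
\Var_{S'\sim\Dcal^n}\big(\Lcal_{S'}(h_\omegabf)\big)=\frac{1}{2n(n-1)}\Big[2(n-2)\mu^2(1-\mu^2)+1-\mu^4\Big] \;=\;\frac{2n-1}{8n(n-1)}\;>\;\frac{2n-2}{8n(n-1)}=\frac1{4n}
\end{equation*}
when $\mu^2=\tfrac12$; for $n=2$ (with $p=\tfrac12$) the variance even reaches $\tfrac14$ against the claimed $\tfrac18$.

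Your Hoeffding-decomposition sketch is the right tool for locating the true constant, and the step you flagged as the ``hard part'' does go through for binary labels: writing $W(\xbf,y)=\Esp_{(\xbf',y')}\big[\lambda(y,y')\cos(\omegabf\cdot(\xbf-\xbf'))\big]$ and $r=\big|\Esp[\lambda\, e^{i\,\omegabf\cdot\xbf'}]\big|\le1$, one checks $\Esp\, W=r^2$ and $\Esp\, W^2\le r^2$, hence $\Var(W)\le r^2(1-r^2)\le\tfrac14$ and $\sigma_1^2=\tfrac14\Var(W)\le\tfrac1{16}$. But, as you anticipated, the residual term cannot be absorbed: combining $\sigma_1^2\le\tfrac1{16}$ with the kernel variance bound $\tfrac14$ yields at best $\tfrac{1}{4(n-1)}$, and the example above shows the excess over $\tfrac1{4n}$ is genuine rather than an artifact of loose bounding. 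So you should not regard falling back to $\tfrac1n$ as a defect of your argument; the lemma should read $\tfrac1n$ (or $\tfrac{1}{4(n-1)}$ with your second-order refinement), and the only downstream consequence is a constant factor in Theorem~\ref{thm:ustats2}, e.g.\ $\tfrac{1}{\sqrt n}$ in place of $\tfrac{1}{2\sqrt n}$ in the first case.
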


\begin{proof}
Given $S'=\{(\xbf_i,y_i)\}_{i=1}^n\sim\Dcal^n$, we denote  
$$\Fcal_\omegabf\big(S') \eqdef \Fcal_\omegabf\big((\xbf_1, y_1), \ldots, (\xbf_n,y_n)\big) \eqdots \Lcal_{S'}(h_\omegabf) = \frac{1}{n(n-1)}\sum_{i\neq j}^n   \ell\Big( h_\omegabf(\xbf_i-\xbf_j), \lambda(y_i, y_j)\Big)\,.$$
The function $\Fcal_\omegabf$ above has the \emph{bounded differences property}. That is, for each $i\in\{1,\ldots,n\}$\,:
$$\sup_{S', \xbf^*\in\Rbb^d, y^*\in Y} \left|
\Fcal_\omegabf\big((\xbf_1, y_1), \ldots, (\xbf_n,y_n)\big) 
-
\Fcal_\omegabf\big((\xbf_1, y_1), \ldots, (\xbf_{i-1}, y_{i-1}), (\xbf^*, y^*), (\xbf_{i+1}, y_{i+1}), \ldots, (\xbf_n,y_n)\big) 
\right| \leq \frac1n,$$
Thus, we apply the Efron-Stein inequality \citep[following][Corollary~3.2]{boucheron-13} to obtain 
$$\Var_{S'\sim \Dcal^n} \left( \Fcal_\omegabf (S') \right)\leq \frac14 \sum_{i=1}^n \left(\frac1n\right)^2 = \frac1{4n}\,. $$
\end{proof}

\subsection{Kernel Alignment Loss Computation}
\label{section:kalc}

The kernel learning algorithms presented in Section~\ref{sec:revisited} require to compute the empirical kernel alignment loss for each hypothesis $h_\omegabf$, given by
\begin{align} \label{eq:naif}
        \widehat\Lcal_{S}(h_\omegabf) &= \frac{1}{n(n{-}1)}\sum_{i\ne j}^{n} \ell(h_\omegabf(\deltabf_{ij}), \lambda_{ij})\,.
\end{align}
A naive implementation of Equation~\eqref{eq:naif} would need $O(n^2)$ steps. Propositions~\ref{prop:O(n)-binary} and~\ref{prop:O(n)-multiclass} below show how to rewrite Equation~\eqref{eq:naif} in a form that needs $O(n)$ steps. Proposition~\ref{prop:O(n)-binary} is dedicated to the binary classification, and is equivalent to the computation method proposed by \citet{SinhaD16}. By Proposition~\ref{prop:O(n)-multiclass}, we extend the result to the multi-classification case.

\begin{prop}[Binary classification]\label{prop:O(n)-binary} When $S={(\xbf_i, y_i)}_{i=1}^n \in (\Rbb^d \times \{-1, 1\})^n$, we have

\begin{equation*}
     \widehat\Lcal_{S}(h_\omegabf) =  \frac{n}{2(n{-}1)}  - \frac{1}{2n(n{-}1)}\left[\left(\sum_{i=1}^{n}y_i\cos(\omegabf \cdot \xbf_i)\right)^{2} + \left(\sum_{i=1}^{n}y_i\sin(\omegabf \cdot \xbf_i)\right)^{2} \right]\,.
\end{equation*}
\end{prop}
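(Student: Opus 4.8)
The plan is to start directly from the definition $\widehat\Lcal_{S}(h_\omegabf) = \frac{1}{n(n-1)}\sum_{i\ne j} \ell\big(h_\omegabf(\deltabf_{ij}), \lambda_{ij}\big)$ and manipulate it into a separable form. The crucial first observation, special to the binary setting $y_i\in\{-1,1\}$, is that the pairwise similarity becomes a simple product: $\lambda_{ij}=y_iy_j$, since $y_iy_j=1$ exactly when $y_i=y_j$ and $y_iy_j=-1$ otherwise, which coincides with the case-based definition of $\lambda$. Replacing the conditional definition by this arithmetic product is precisely what will later enable the factorization.

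Next I would substitute the linear loss $\ell\big(h_\omegabf(\deltabf_{ij}),\lambda_{ij}\big)=\tfrac{1-\lambda_{ij}\,h_\omegabf(\deltabf_{ij})}{2}$ and split the double sum into its constant part and its trigonometric part. The constant part $\frac{1}{2n(n-1)}\sum_{i\ne j}1$ evaluates to $\tfrac12$, because there are exactly $n(n-1)$ ordered off-diagonal index pairs.

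The heart of the argument is the decoupling step. Using the cosine difference identity $\cos\big(\omegabf\cdot(\xbf_i-\xbf_j)\big)=\cos(\omegabf\cdot\xbf_i)\cos(\omegabf\cdot\xbf_j)+\sin(\omegabf\cdot\xbf_i)\sin(\omegabf\cdot\xbf_j)$ together with $\lambda_{ij}=y_iy_j$, each summand factors into a product of a term depending only on $i$ and a term depending only on $j$. Writing $a_i\eqdef y_i\cos(\omegabf\cdot\xbf_i)$ and $b_i\eqdef y_i\sin(\omegabf\cdot\xbf_i)$, the sum over \emph{all} ordered pairs (including the diagonal) then collapses as $\sum_{i,j}(a_ia_j+b_ib_j)=\big(\sum_i a_i\big)^2+\big(\sum_i b_i\big)^2$, which are exactly the two squared sums appearing in the statement.

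The only point requiring care, and the single place where the two sums must be reconciled, is that the target sum ranges over $i\ne j$ whereas the factorization naturally produces the full sum over all $(i,j)$. The remedy is to add and subtract the diagonal. Each diagonal term equals $a_i^2+b_i^2=y_i^2\big(\cos^2(\omegabf\cdot\xbf_i)+\sin^2(\omegabf\cdot\xbf_i)\big)=1$, so the diagonal contributes exactly $n$. Subtracting this $n$ from the squared sums, applying the $-\frac{1}{2n(n-1)}$ prefactor, and recombining the resulting $+\frac{n}{2n(n-1)}=\frac{1}{2(n-1)}$ with the earlier constant $\tfrac12$ gives $\frac{1}{2}+\frac{1}{2(n-1)}=\frac{n}{2(n-1)}$, which is precisely the claimed closed form. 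No step presents a genuine difficulty; the whole proof is a routine calculation whose success hinges on the identity $\lambda_{ij}=y_iy_j$ and on correct bookkeeping of the diagonal correction and the constant terms.
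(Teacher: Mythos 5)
Your proof is correct and follows essentially the same route as the paper's: rewrite $\lambda_{ij}=y_iy_j$, expand $\cos\big(\omegabf\cdot(\xbf_i-\xbf_j)\big)$ via the angle-difference identity so the double sum factorizes into squared sums, correct for the diagonal (which contributes exactly $n$ since $y_i^2(\cos^2+\sin^2)=1$), and recombine the constants into $\frac{n}{2(n-1)}$. No substantive difference from the paper's argument.
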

That  is, in the binary classification case ($y \in  \{-1,1\}$), one can compute the empirical alignment loss $\widehat\Lcal_{S}(h_\omegabf)$ in $O(n)$ steps.
\begin{proof} Using the cosine trigonometric identity
\begin{align*}
    \sum_{i\ne j}^{n} \lambda_{ij}h_\omegabf(\xbf_i-\xbf_j) &= \sum_{i=1}^{n}\sum_{j=1}^{n} y_i y_j \cos(\omegabf \cdot (\xbf_i{-}\xbf_j)) -n\\
    &= \sum_{i=1}^{n}\sum_{j=1}^{n} y_i y_j\left(\cos(\omegabf \cdot \xbf_i)\cos(\omegabf \cdot \xbf_j)+\sin(\omegabf \cdot \xbf_i)\sin(\omegabf \cdot \xbf_j)\right) - n\\
    &= \left(\sum_{i=1}^{n}y_i\cos(\omegabf \cdot \xbf_i)\right)^{2} + \left(\sum_{i=1}^{n}y_i\sin(\omegabf \cdot \xbf_i)\right)^{2} - n
\end{align*}
Thus, 
\begin{align*}
    \widehat\Lcal_{S}(h_\omegabf) &= \frac{1}{n(n{-}1)}\sum_{i\ne j}^{n} \ell(h_\omegabf(\deltabf_{ij}), \lambda_{ij})\\
    &= \frac{1}{n(n{-}1)}\sum_{i\ne j}^{n} \frac{1 - \lambda_{ij}h_\omegabf(\deltabf_{ij})}{2}\\
    &= \frac{1}{2} - \frac{1}{2n(n{-}1)}\sum_{i\ne j}^{n} \lambda_{ij}h_\omegabf(\xbf_i{-}\xbf_j)\\
    &= \frac{1}{2} - \frac{1}{2n(n{-}1)}\left[\left(\sum_{i=1}^{n}y_i\cos(\omegabf \cdot \xbf_i)\right)^{2} + \left(\sum_{i=1}^{n}y_i\sin(\omegabf \cdot \xbf_i)\right)^{2} - n\right]\\
    &=  \frac{n}{2(n{-}1)} - \frac{1}{2n(n{-}1)}\left[\left(\sum_{i=1}^{n}y_i\cos(\omegabf \cdot \xbf_i)\right)^{2} + \left(\sum_{i=1}^{n}y_i\sin(\omegabf \cdot \xbf_i)\right)^{2} \right]\,.
\end{align*}
\end{proof}

\begin{prop}[Multi-class classification]\label{prop:O(n)-multiclass} When $S={(\xbf_i, y_i)}_{i=1}^n \in (\Rbb^d \times \{1,\ldots,L\})^n$, we have \begin{equation*}
\widehat\Lcal_{S}(h_\omegabf) =  \frac{n}{2(n{-}1)} - \frac{1}{2n(n{-}1)}\left[2 \sum_{y=1}^L (c_y^2 + s_y^2) - \left(\sum_{y=1}^L c_y\right)^2 - \left(\sum_{y=1}^L s_y\right)^2 \right]\,,
\end{equation*}
with 
\begin{align*}
    c_y \eqdef  \sum_{\xbf\in S_y}\cos(\omegabf \cdot \xbf)\, \quad \mbox{ and } \quad
    s_y \eqdef  \sum_{\xbf\in S_y}\sin(\omegabf \cdot \xbf)\,.
\end{align*}
\end{prop}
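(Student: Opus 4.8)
The plan is to mirror the binary argument of Proposition~\ref{prop:O(n)-binary}, the only genuine change being how the pairwise similarity $\lambda_{ij}$ is rewritten once labels are no longer $\pm1$. First I would reuse the opening reduction: since $\ell(h_\omegabf(\deltabf_{ij}),\lambda_{ij}) = \frac{1-\lambda_{ij}h_\omegabf(\xbf_i-\xbf_j)}{2}$, we get $\widehat\Lcal_{S}(h_\omegabf) = \frac12 - \frac1{2n(n{-}1)}\sum_{i\ne j}^n \lambda_{ij}h_\omegabf(\xbf_i-\xbf_j)$. To turn the off-diagonal sum into one over all ordered pairs, I would add back the diagonal: each $i=j$ term contributes $\lambda_{ii}\,h_\omegabf(\zerobf)=1$, so $\sum_{i\ne j}^n \lambda_{ij}h_\omegabf(\xbf_i-\xbf_j) = \sum_{i,j=1}^n \lambda_{ij}h_\omegabf(\xbf_i-\xbf_j) - n$.

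Next I would expand the cosine via the same product-to-sum identity as the binary proof, namely $\cos(\omegabf\cdot(\xbf_i-\xbf_j)) = \cos(\omegabf\cdot\xbf_i)\cos(\omegabf\cdot\xbf_j) + \sin(\omegabf\cdot\xbf_i)\sin(\omegabf\cdot\xbf_j)$. The crucial multi-class step is to replace $\lambda_{ij}$, which is no longer the product $y_iy_j$, by the indicator identity $\lambda_{ij} = 2\,\mathbf{1}[y_i=y_j] - 1$. Inserting this splits the double sum into a \emph{same-label} contribution (weight $+2$) and a \emph{total} contribution (weight $-1$).

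For the total contribution, the sum factorizes as $\big(\sum_i \cos(\omegabf\cdot\xbf_i)\big)^2 + \big(\sum_i \sin(\omegabf\cdot\xbf_i)\big)^2 = \big(\sum_y c_y\big)^2 + \big(\sum_y s_y\big)^2$, since summing each single-argument feature over the whole sample equals summing the per-class totals $c_y,s_y$. For the same-label contribution, restricting the double sum to pairs with $y_i=y_j$ decouples it into a sum over labels, and within each label $y$ the inner double sum factorizes into $c_y^2 + s_y^2$; summing over $y$ yields $\sum_{y=1}^L (c_y^2+s_y^2)$. Collecting the $+2$ and $-1$ weights gives $\sum_{i,j}\lambda_{ij}h_\omegabf(\xbf_i-\xbf_j) = 2\sum_{y=1}^L(c_y^2+s_y^2) - \big(\sum_y c_y\big)^2 - \big(\sum_y s_y\big)^2$.

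Finally I would substitute back, subtract the diagonal $n$, and absorb it into the constant via $\frac12 + \frac{n}{2n(n-1)} = \frac{n}{2(n-1)}$, which produces the claimed formula. I expect no serious obstacle; the only point requiring care is the passage from $\lambda_{ij}=y_iy_j$ (valid only for $y\in\{-1,1\}$) to the indicator form $2\,\mathbf{1}[y_i=y_j]-1$, together with the bookkeeping that shows the same-label double sum factorizes class-by-class into $c_y^2+s_y^2$ rather than collapsing into a single square as in the binary case.
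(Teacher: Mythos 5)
Your proposal is correct and follows essentially the same route as the paper's proof: adding back the diagonal to get an unrestricted double sum, rewriting $\lambda_{ij}=2\,\mathbf{1}[y_i=y_j]-1$, expanding the cosine of a difference, and factorizing the same-label and total contributions into $\sum_y(c_y^2+s_y^2)$ and $\bigl(\sum_y c_y\bigr)^2+\bigl(\sum_y s_y\bigr)^2$ respectively. No gaps.
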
 
That  is, in the multi-class classification case with $L$ classes ($y \in  \{1,\ldots,L\})^n$), one can compute the empirical alignment loss $\widehat\Lcal_{S}(h_\omegabf)$ in $O(n)$ steps.
\begin{proof}
\begin{align*}
    \sum_{i\ne j}^{n} \lambda_{ij}h_\omegabf(\xbf_i-\xbf_j) &= \sum_{i=1}^{n}\sum_{j=1}^{n} \lambda_{ij} \cos(\omegabf \cdot (\xbf_i{-}\xbf_j)) -n\\
    &= \sum_{i=1}^{n}\sum_{j=1}^{n} (2 I[y_i = y_j]-1) \cos(\omegabf \cdot (\xbf_i{-}\xbf_j)) -n\\
    &= 2\sum_{i=1}^{n}\sum_{j=1}^{n} I[y_i = y_j] \cos(\omegabf \cdot (\xbf_i{-}\xbf_j)) - \sum_{i=1}^{n}\sum_{j=1}^{n}  \cos(\omegabf \cdot (\xbf_i{-}\xbf_j)) -n\\
\end{align*}

Let's denote $S_y \eqdef \{\xbf_i|(\xbf_i, y)\in S\}$. We have 
\begin{align*}
\sum_{i=1}^{n}\sum_{j=1}^{n} I[y_i = y_j] \cos(\omegabf \cdot (\xbf_i{-}\xbf_j)) 
&= \sum_{y=1}^L\sum_{\xbf\in S_y}\sum_{\xbf'\in S_y} \cos(\omegabf \cdot (\xbf{-}\xbf'))\\
&=  \sum_{y=1}^L \left[ \left(\sum_{\xbf\in S_y}\cos(\omegabf \cdot \xbf)\right)^{2} + \left(\sum_{\xbf\in S_y}\sin(\omegabf \cdot \xbf)\right)^{2} \right],
\end{align*}
and
\begin{align*}
\sum_{i=1}^{n}\sum_{j=1}^{n}  \cos(\omegabf \cdot (\xbf_i{-}\xbf_j))
&=  \left(\sum_{y=1}^L \sum_{\xbf\in S_y}\cos(\omegabf \cdot \xbf)\right)^{2} + \left(\sum_{y=1}^L \sum_{\xbf\in S_y}\sin(\omegabf \cdot \xbf)\right)^{2}.
\end{align*}
Thus, we can rewrite
\begin{align*}
     \sum_{i\ne j}^{n} \lambda_{ij}h_\omegabf(\xbf_i-\xbf_j) = 
     2 \sum_{y=1}^L (c_y^2 + s_y^2) - \left(\sum_{y=1}^L c_y\right)^2 - \left(\sum_{y=1}^L s_y\right)^2 - n\,.
\end{align*}
\newpage
Therefore,

\vspace{-10mm}
\begin{align*}
    \widehat\Lcal_{S}(h_\omegabf) &= \frac{1}{n(n{-}1)}\sum_{i\ne j}^{n} \ell(h_\omegabf(\deltabf_{ij}), \lambda_{ij})\\
    &= \frac{1}{n(n{-}1)}\sum_{i\ne j}^{n} \frac{1 - \lambda_{ij}h_\omegabf(\deltabf_{ij})}{2}\\
    &= \frac{1}{2} - \frac{1}{2n(n{-}1)}\sum_{i\ne j}^{n} \lambda_{ij}h_\omegabf(\xbf_i{-}\xbf_j)\\
    &= \frac{1}{2} - \frac{1}{2n(n{-}1)}\left[2 \sum_{y=1}^L (c_y^2 + s_y^2) - \left(\sum_{y=1}^L c_y\right)^2 - \left(\sum_{y=1}^L s_y\right)^2 - n\right]\\
    &=  \frac{n}{2(n{-}1)} - \frac{1}{2n(n{-}1)}\left[2 \sum_{y=1}^L (c_y^2 + s_y^2) - \left(\sum_{y=1}^L c_y\right)^2 - \left(\sum_{y=1}^L s_y\right)^2 \right]\,.\\&\qedhere
\end{align*}
\end{proof}

\subsection{Experiments}

\paragraph{Implementation details.} The code used to run the experiments is available at:
\begin{center}
    \url{https://github.com/gletarte/pbrff}
\end{center}
 In Section \ref{sec:experiments} we use the following datasets: 
\begin{description}
    \item[ads] http://archive.ics.uci.edu/ml/datasets/Internet+Advertisements\\
    The first 4 features which have missing values are removed.
    \item[adult] {https://archive.ics.uci.edu/ml/datasets/Adult}
    \item[breast] https://archive.ics.uci.edu/ml/datasets/Breast+Cancer+Wisconsin+(Diagnostic).
    \item[farm] https://archive.ics.uci.edu/ml/datasets/Farm+Ads
    \item[mnist] http://yann.lecun.com/exdb/mnist/ \\
     As \cite{SinhaD16}, binary classification tasks are compiled with the following digits pairs: 1 vs. 7, 4 vs. 9, and 5 vs. 6.
\end{description}

We split the datasets into training and testing sets with a 75/25 ratio except for adult which has a training/test split already computed. We then use 20\% of the training set for validation. Table \ref{tab:datasets_overview} presents an overview. We use the following parameter values range for selection on the validation set:

\begin{figure}[h!]
\begin{floatrow}
\ffigbox{%
\begin{itemize}   
    \item $C \in \{10^{-5}, 10^{-4}, \dots, 10^{4}\}$ 
    
    \item $\sigma \in \{10^{-7}, 10^{-6}, \dots, 10^{2}\}$
    
     \item $\rho \in \{10^{-4}N, 10^{-3}N, \dots, 10^{0}N\}$ 
     
     \item $\beta \in \{10^{-3}, 10^{-2}, \dots, 10^{3}\}$
     
     \item $D \in \{8, 16, 32, 64, 128\}$
     \item[]
\end{itemize}
}{}
\capbtabbox{%
  \centering
  \setlength{\tabcolsep}{4pt}
  \begin{tabular}{lrrrrr}
    \toprule
  
    Dataset & $n_{train}$ & $n_{valid}$ & $n_{test}$ &  $d$\\
    \midrule
    ads     &   1967 & 492 & 820 & 1554 \\
    adult   &  26048 & 6513 & 16281 & 108 \\
    breast  &   340 & 86 & 143 & 30 \\
    farm    &  2485 & 622 & 1036 & 54877\\
    mnist17 &   9101 & 2276 & 3793 & 784\\
    mnist49 &   8268 & 2068 & 3446 & 784 \\
    mnist56 &   7912 & 1979 & 3298 & 784 \\
    \bottomrule
\end{tabular}
}{%
  \caption{Datasets overview.}\label{tab:datasets_overview}%
}
\end{floatrow}
\end{figure}

\paragraph{Supplementary experiments.} Figures~\ref{fig:pgtoy_supp} and~\ref{fig:err_land_supp}
present extra results obtained for the \emph{landmarks-based learning} experiments (Subsection~\ref{section:expe_landmarks}). Figure~\ref{fig:err_vs_d_supp} gives extra results for the \emph{greedy kernel learning} experiment (Subsection~\ref{section:expe_greedy}).

    
    
     
     

  

\begin{figure*}[t] \centering
	\makebox[\textwidth]{\includegraphics[width=1\textwidth]{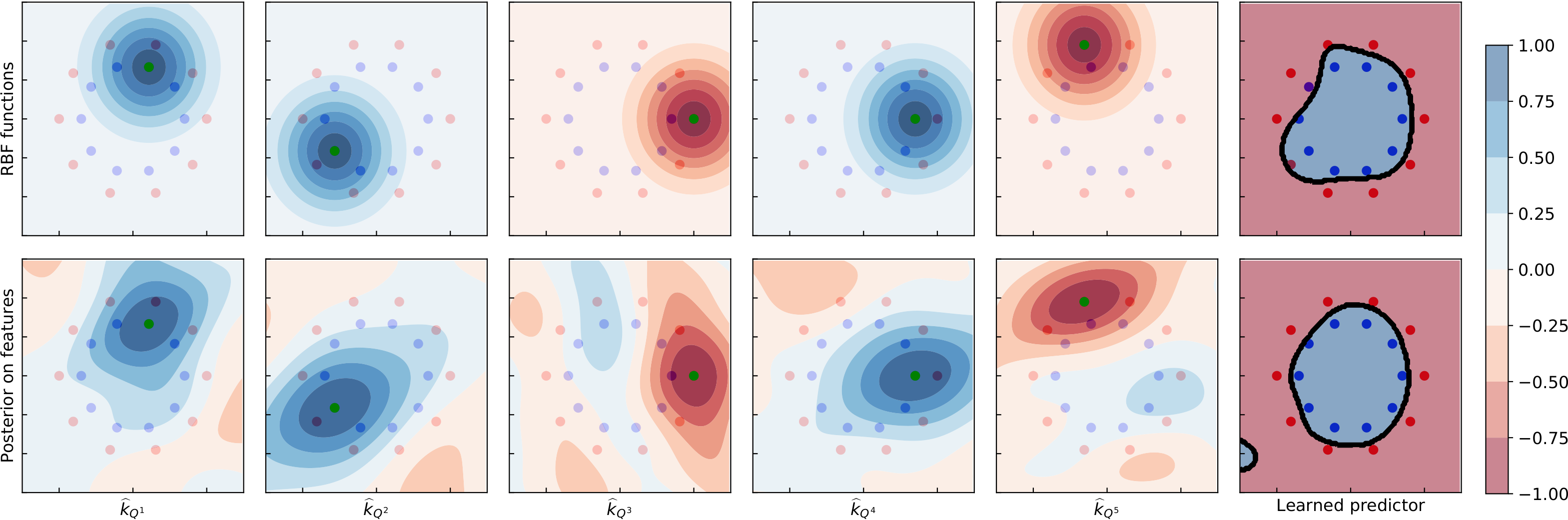}} \\  
	\caption{Repetition of Figure~\ref{fig:pgtoy}'s experiment, with another toy dataset.
	 First row shows selected RBF-Landmarks kernel outputs, while second row shows the corresponding learned similarity measures on random Fourier features (PB-Landmarks). The rightmost column displays the classification learned by a linear SVM over the mapped dataset.}  \label{fig:pgtoy_supp} 
\end{figure*}

\begin{figure*}[t!]
    \centering
    \begin{subfigure}[t]{0.48\textwidth}
        \centering
        \includegraphics[trim={0cm 0cm 0cm 0cm},clip,width=\linewidth]{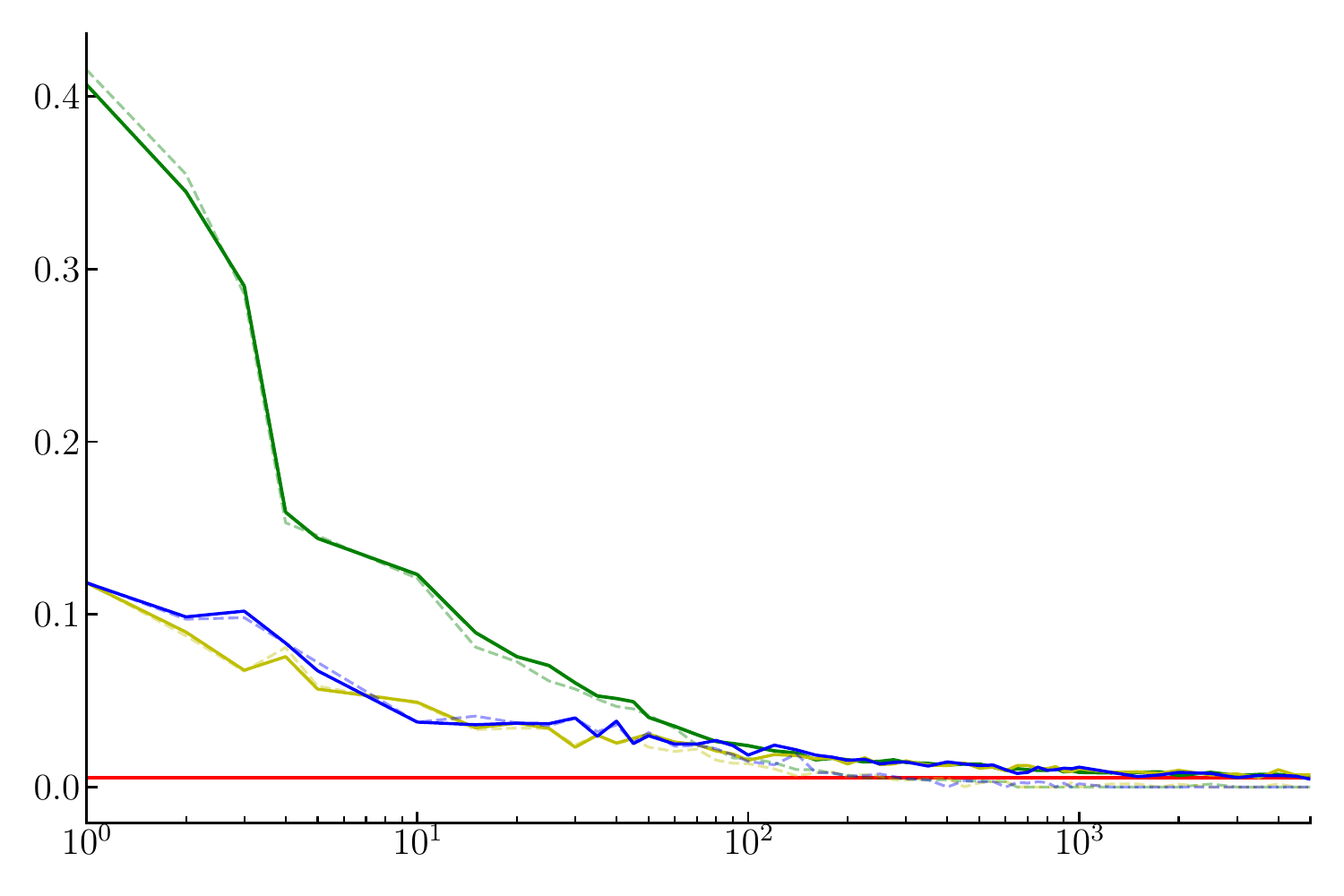}
        \caption{mnist56}
        \label{fig:err_vs_d_mnist56}
    \end{subfigure}
    \hfill
    \begin{subfigure}[t]{0.48\textwidth}
        \centering
        \includegraphics[trim={0cm 0cm 0cm 0cm},clip,width=\linewidth]{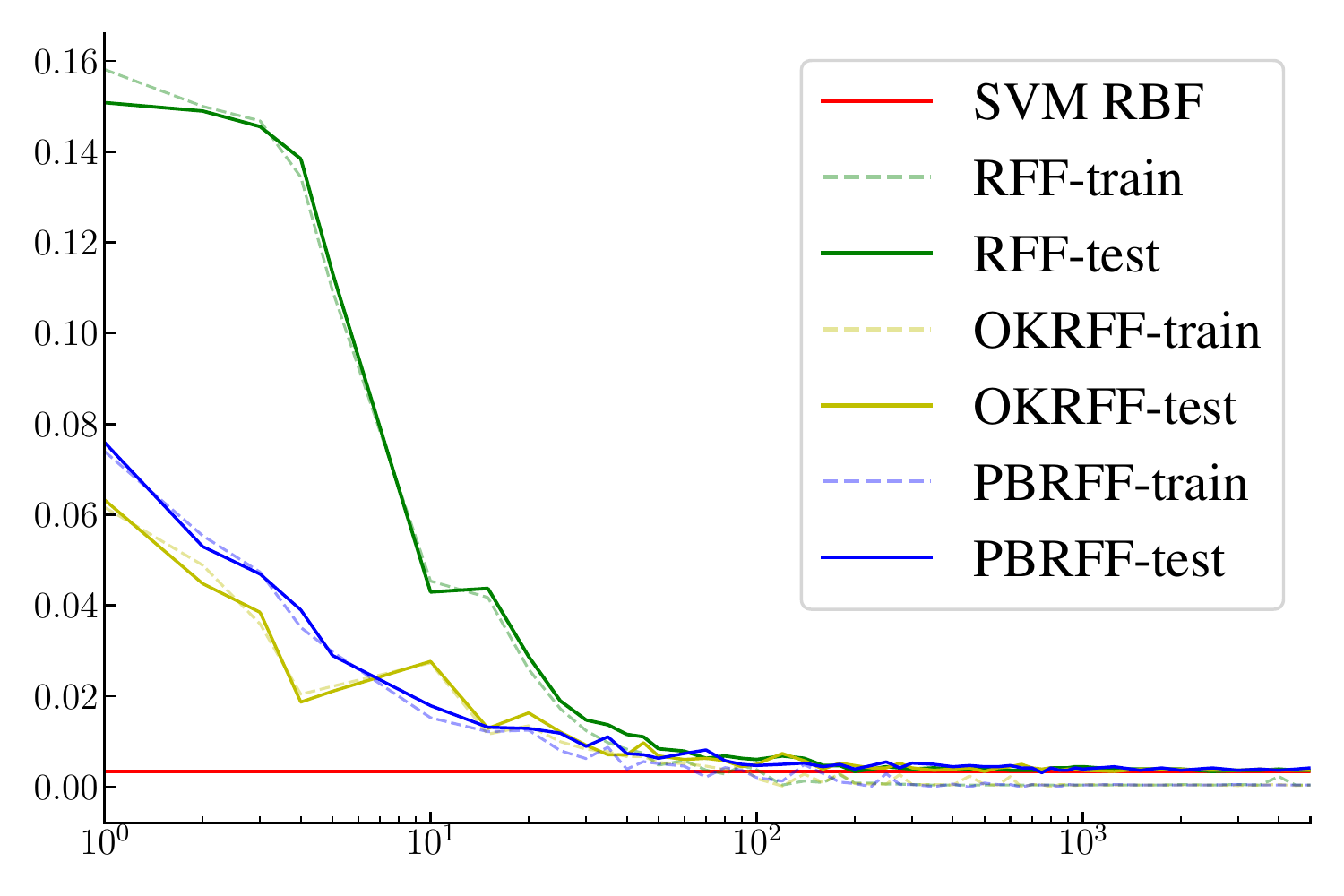}
        \caption{mnist17}
         \label{fig:err_vs_d_mnist17}
    \end{subfigure}
    \medskip
    \begin{subfigure}[t]{0.48\textwidth}
        \centering
        \includegraphics[trim={0cm 0cm 0cm 0cm},clip,width=\linewidth]{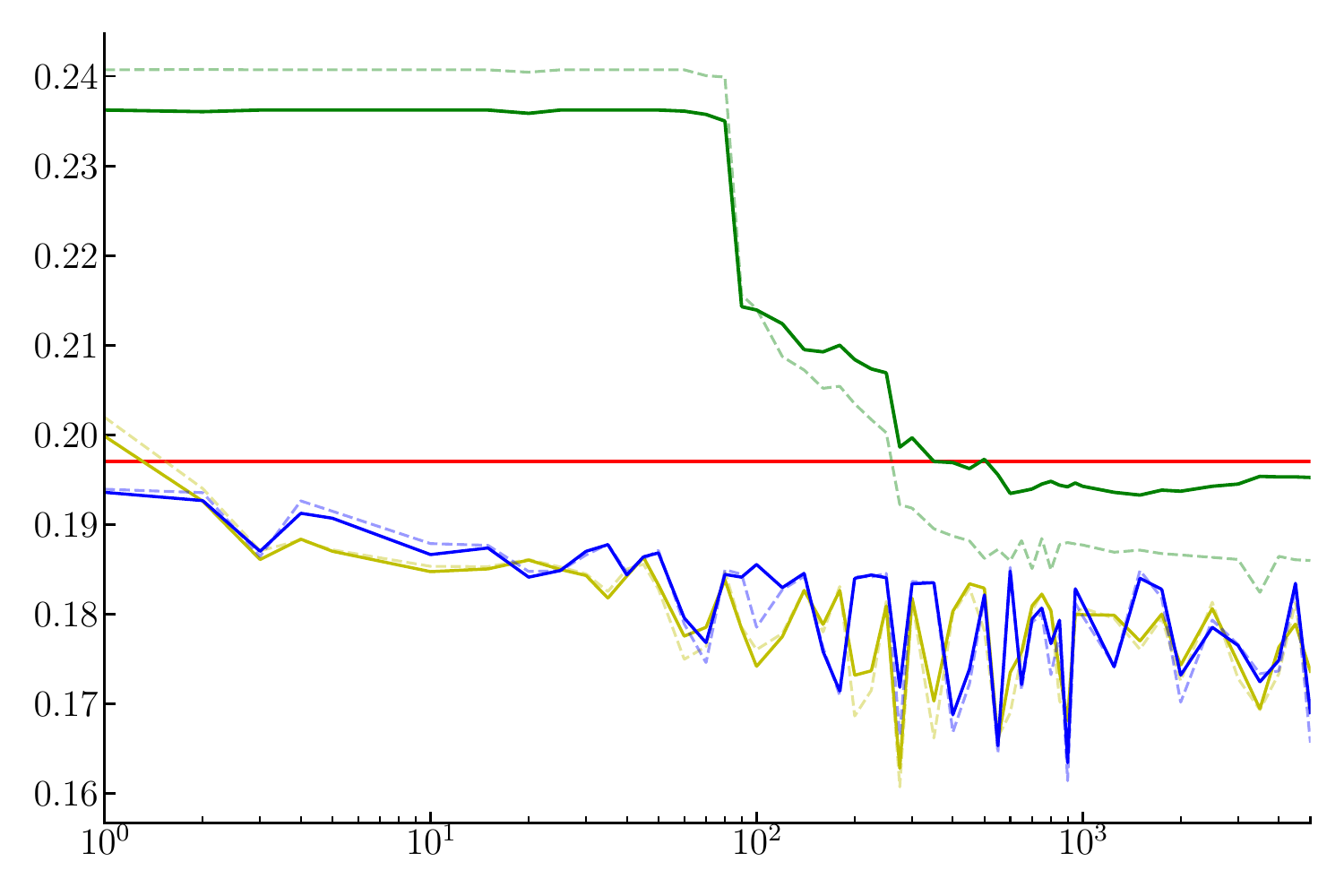}
        \caption{adult}
         \label{fig:err_vs_d_adult}
    \end{subfigure}
    \hfill
    \begin{subfigure}[t]{0.48\textwidth}
        \centering
        \includegraphics[trim={0cm 0cm 0cm 0cm},clip,width=\linewidth]{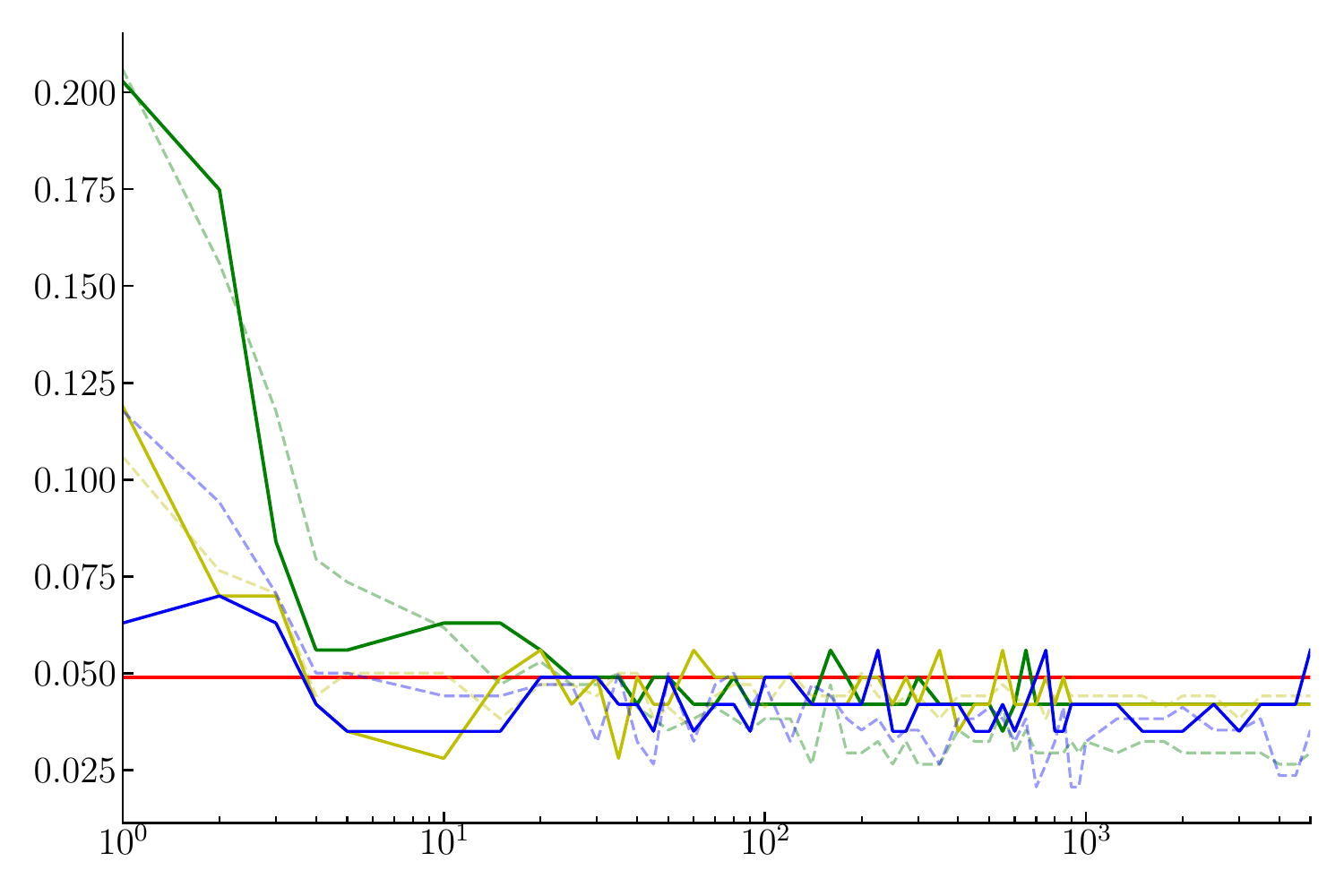}
        \caption{breast}
         \label{fig:err_vs_d_breast}
    \end{subfigure}
    \caption{Train and test error of the kernel learning approaches according to the number of random features $D$ on the remaining 4 datasets (not reported by Figure~\ref{fig:err_vs_d}).}
    \label{fig:err_vs_d_supp}
\end{figure*}

\begin{figure*}[t!]
    \centering
    \begin{subfigure}[t]{0.48\textwidth}
        \centering
        \includegraphics[trim={0cm 0cm 0cm 0cm},clip,width=\linewidth]{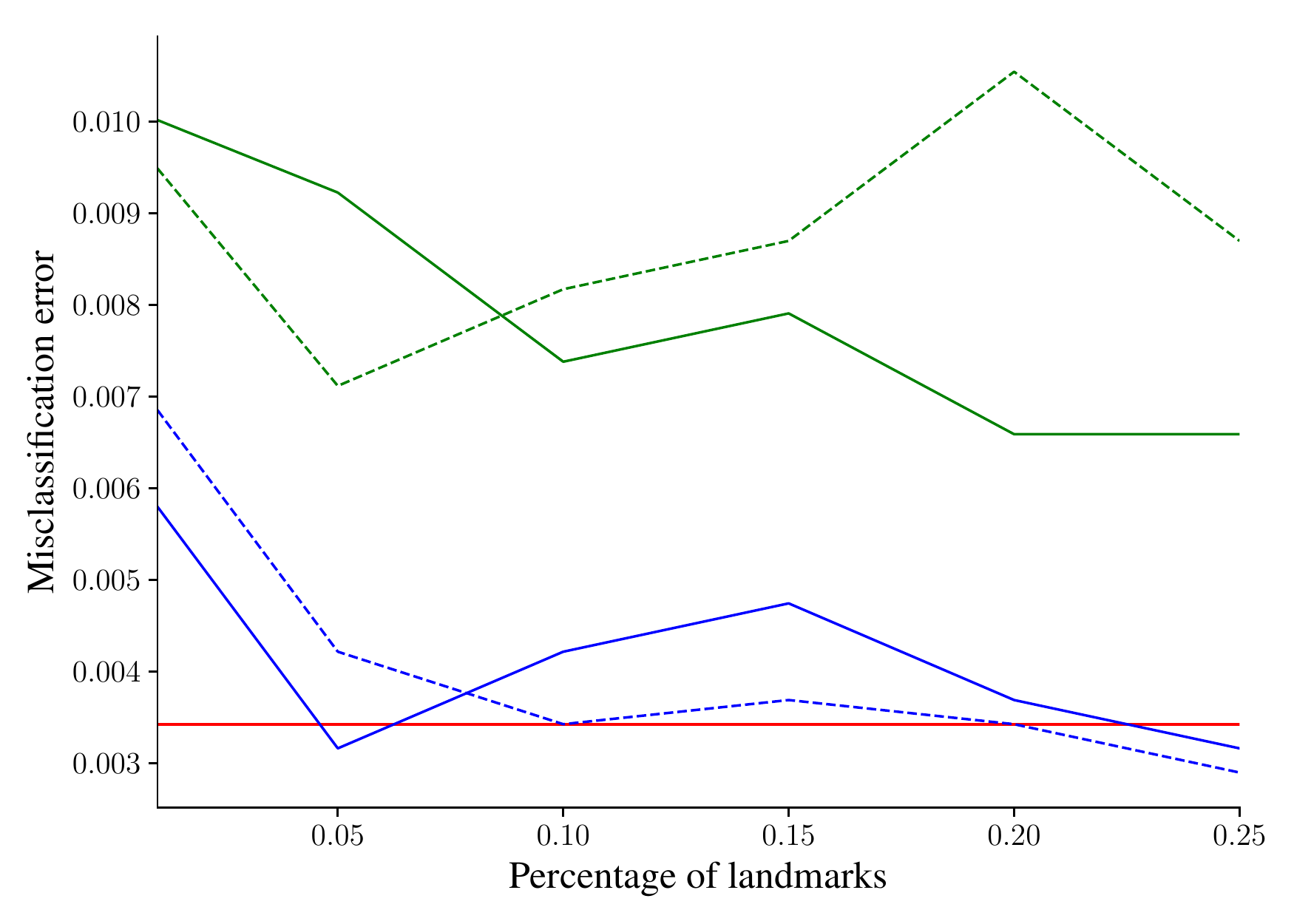}
        \caption{mnist17}
        \label{fig:err_vs_d_mnist17}
    \end{subfigure}
    \hfill
    \begin{subfigure}[t]{0.48\textwidth}
        \centering
        \includegraphics[trim={0cm 0cm 0cm 0cm},clip,width=\linewidth]{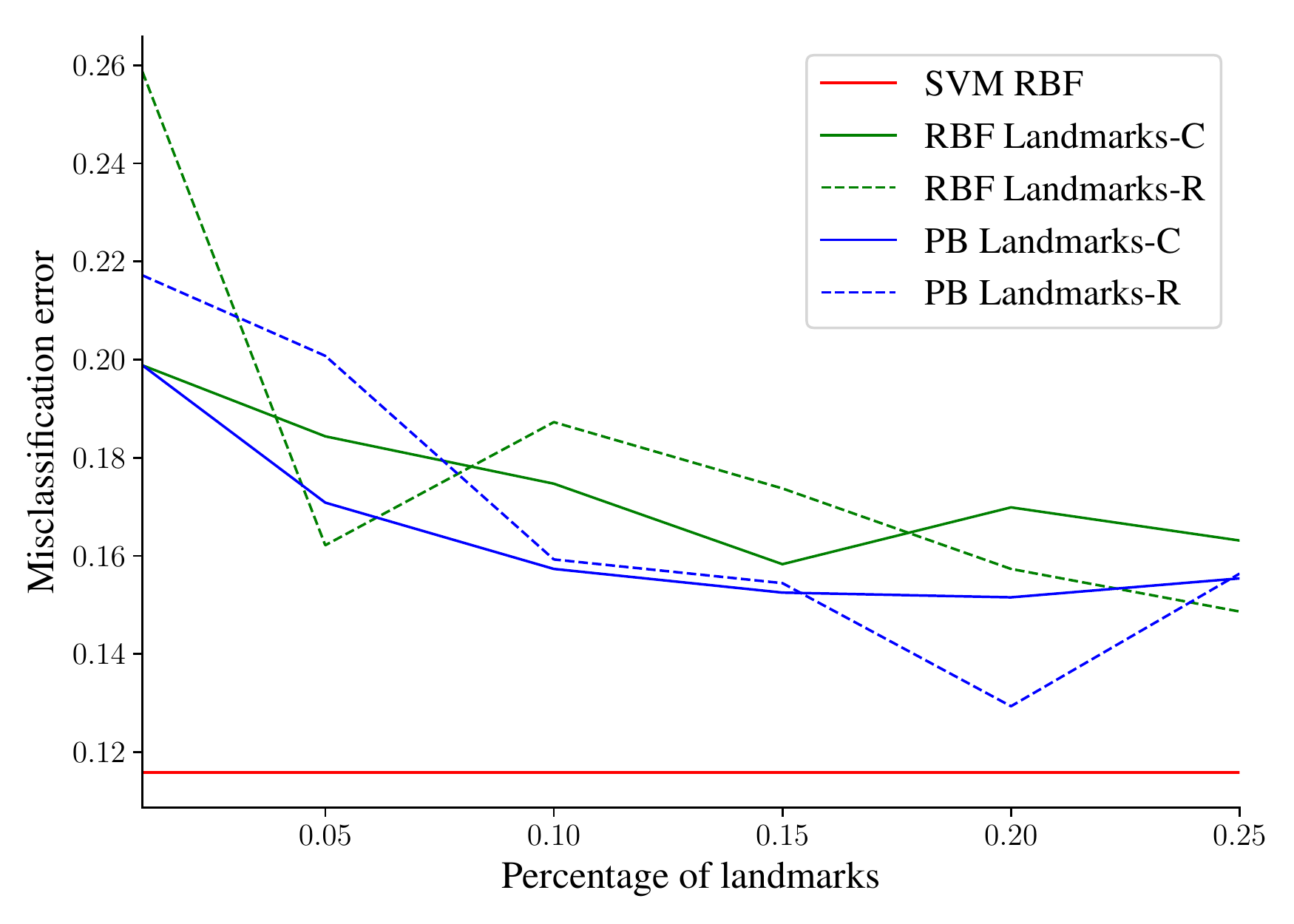}
        \caption{farm}
         \label{fig:err_land_farm}
    \end{subfigure}
    \medskip
    \begin{subfigure}[t]{0.48\textwidth}
        \centering
        \includegraphics[trim={0cm 0cm 0cm 0cm},clip,width=\linewidth]{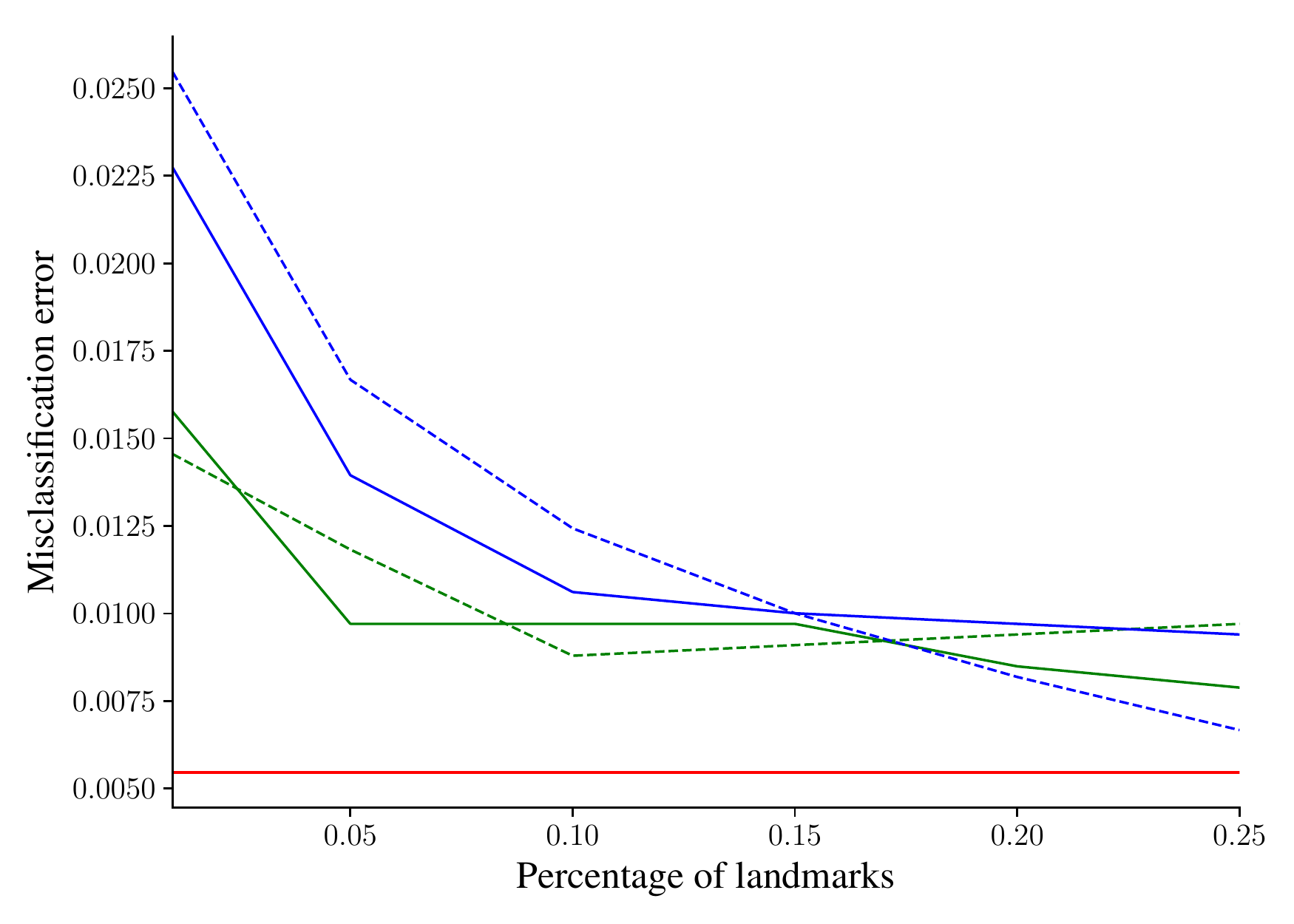}
        \caption{mnist56}
        \label{fig:err_land_mnist56}
    \end{subfigure}
    \hfill
    \begin{subfigure}[t]{0.48\textwidth}
        \centering
        \includegraphics[trim={0cm 0cm 0cm 0cm},clip,width=\linewidth]{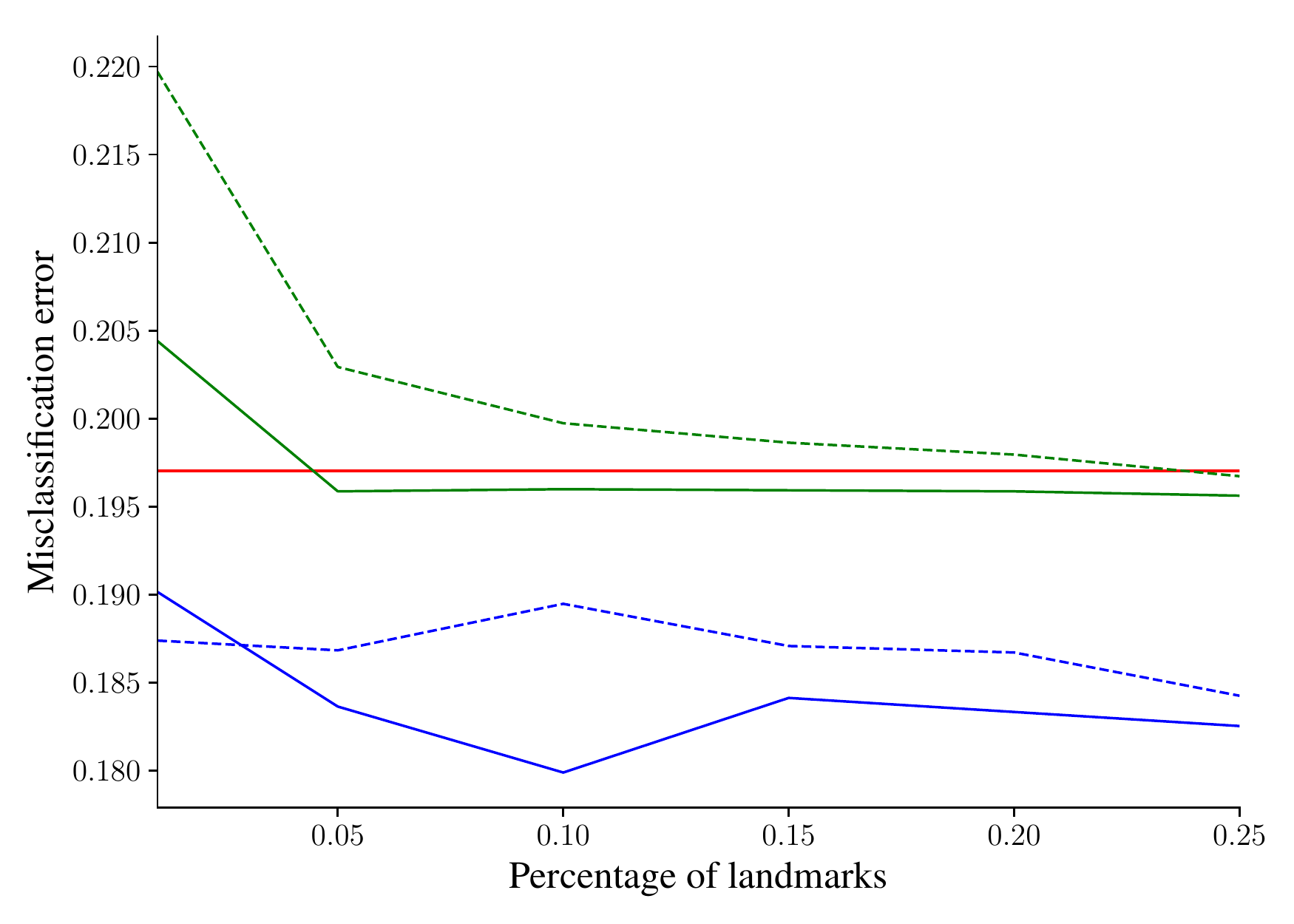}
        \caption{adult}
         \label{fig:err_land_adult}
    \end{subfigure}
    \medskip
    \begin{subfigure}[t]{0.48\textwidth}
        \centering
        \includegraphics[trim={0cm 0cm 0cm 0cm},clip,width=\linewidth]{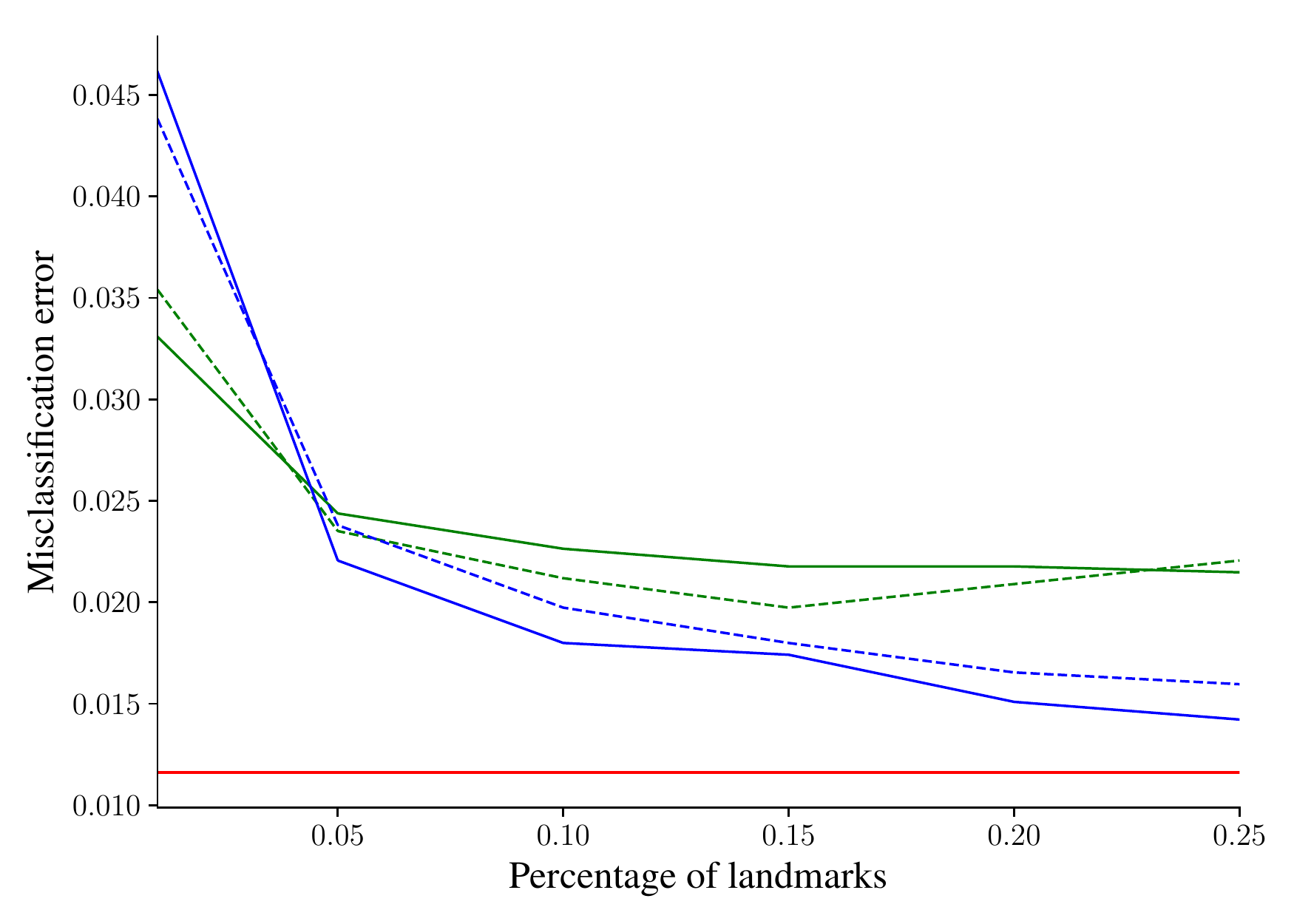}
        \caption{mnist49}
         \label{fig:err_land_mnist49}
    \end{subfigure}
    \hfill
    \begin{subfigure}[t]{0.48\textwidth}
        \centering
        \includegraphics[trim={0cm 0cm 0cm 0cm},clip,width=\linewidth]{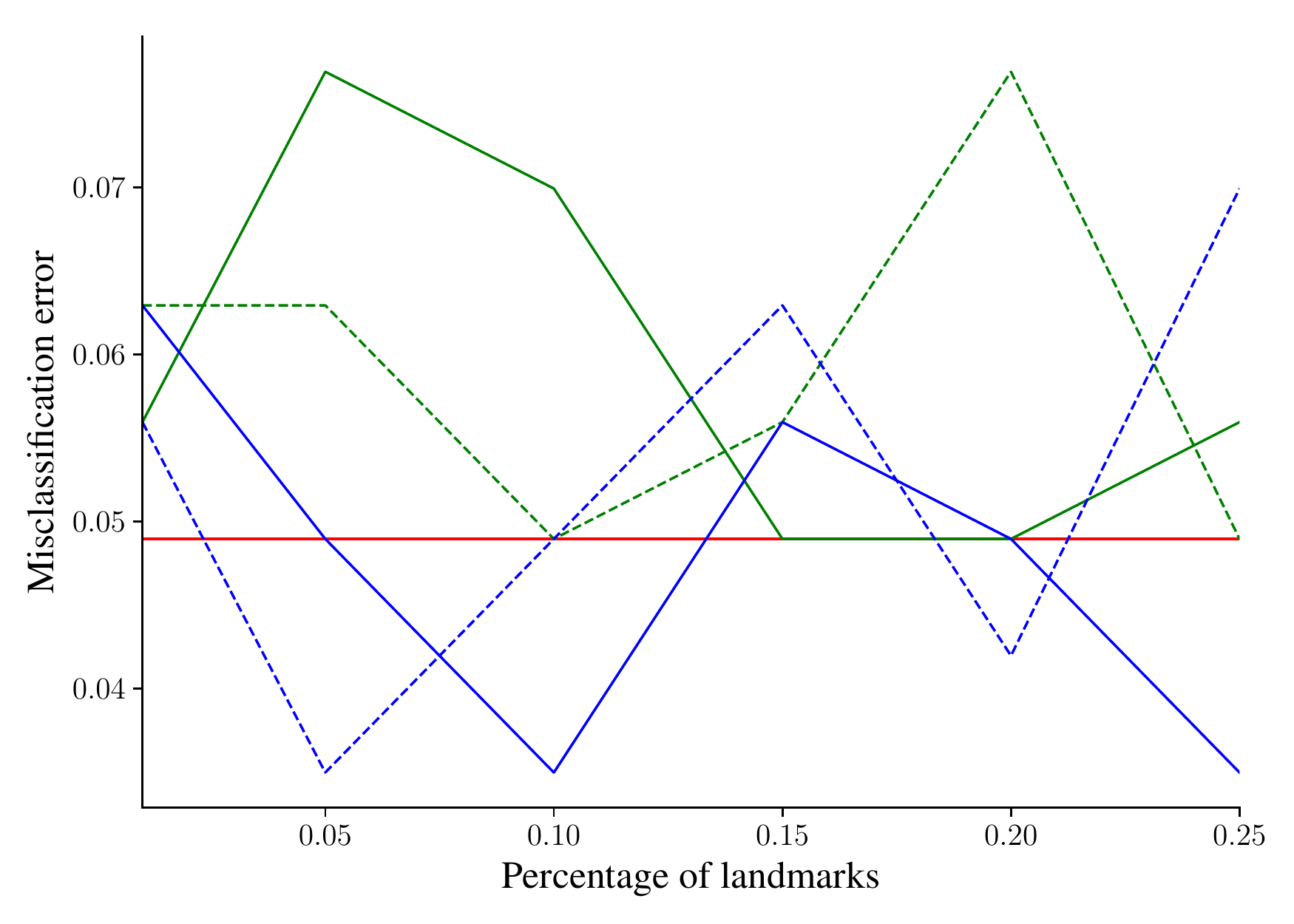}
        \caption{breast}
         \label{fig:err_land_breast}
    \end{subfigure}
    \caption{Behavior of the landmarks-based approach according to the percentage of training points selected as landmarks on the remaining 6 datasets (not reported by Figure~\ref{fig:error_landmarks_ads}).}
    \label{fig:err_land_supp}
\end{figure*}

\end{document}